\documentclass{article} % For LaTeX2e
\usepackage{iclr2019_conference,times}

% Optional math commands from https://github.com/goodfeli/dlbook_notation.
%%%%% NEW MATH DEFINITIONS %%%%%

\usepackage{amsmath,amsfonts,bm}

% Mark sections of captions for referring to divisions of figures

% Highlight a newly defined term

% Figure reference, lower-case.

% Figure reference, capital. For start of sentence

% Section reference, lower-case.

% Section reference, capital.

% Reference to two sections.

% Reference to three sections.

% Reference to an equation, lower-case.
\def\eqref#1{equation~\ref{#1}}
% Reference to an equation, upper case

% A raw reference to an equation---avoid using if possible

% Reference to a chapter, lower-case.

% Reference to an equation, upper case.

% Reference to a range of chapters

% Reference to an algorithm, lower-case.

% Reference to an algorithm, upper case.

% Reference to a part, lower case

% Reference to a part, upper case

\def\1{\bm{1}}

% Random variables

% rm is already a command, just don't name any random variables m

% Random vectors

% Elements of random vectors

% Random matrices

% Elements of random matrices

% Vectors

% Elements of vectors

% Matrix

% Tensor
\DeclareMathAlphabet{\mathsfit}{\encodingdefault}{\sfdefault}{m}{sl}
\SetMathAlphabet{\mathsfit}{bold}{\encodingdefault}{\sfdefault}{bx}{n}

% Graph

% Sets

% Don't use a set called E, because this would be the same as our symbol
% for expectation.

% Entries of a matrix

% entries of a tensor
% Same font as tensor, without \bm wrapper

% The true underlying data generating distribution

% The empirical distribution defined by the training set

% The model distribution

% Stochastic autoencoder distributions

 % Laplace distribution

\newcommand{\R}{\mathbb{R}}

% Wolfram Mathworld says $L^2$ is for function spaces and $\ell^2$ is for vectors
% But then they seem to use $L^2$ for vectors throughout the site, and so does
% wikipedia.

 % See usage in notation.tex. Chosen to match Daphne's book.

\DeclareMathOperator*{\argmax}{arg\,max}
\DeclareMathOperator*{\argmin}{arg\,min}

\usepackage{times}
\usepackage{graphicx,latexsym,amsmath,amssymb,amsthm,eucal,bbm,graphicx,color}
\usepackage{url}
\usepackage{comment}
\usepackage{float}
\usepackage{tcolorbox}
\usepackage{booktabs}
%\restylefloat{table}
%\usepackage{ntheorem}
\usepackage{blindtext,graphicx}
\usepackage{hyperref}
\hypersetup{
    colorlinks,
    linkcolor={red!50!black},
    citecolor={blue!50!black},
    urlcolor={blue!80!black}
}
\newcommand{\bigcdot}{\boldsymbol{\cdot}}
\usepackage{bm}
\usepackage{balance}
\usepackage{comment}
\usepackage{soul}
%\usepackage[margin=1in]{geometry}
%\usepackage{setspace}
%\setstretch{1.0}
%\setlength{\parskip}{2mm}

\usepackage{listings}
\lstset{language=Python,
                basicstyle=\ttfamily,
                keywordstyle=\color{blue}\ttfamily,
                stringstyle=\color{red}\ttfamily,
                commentstyle=\color{green}\ttfamily,
                morecomment=[l][\color{magenta}]{\#}
}
\usepackage[version=4]{mhchem}
\usepackage{empheq}
\usepackage{mdframed}
\usepackage{nomencl,etoolbox,ragged2e,siunitx}
\usepackage{tikz}
\usetikzlibrary{shapes,arrows}
\usetikzlibrary{er,positioning}

\usetikzlibrary{fit,positioning}
\tikzstyle{block} = [rectangle, draw, fill=blue!20, 
    text width=12.8em, text centered, rounded corners, minimum height=4em]
\tikzstyle{line} = [draw, -latex']
\tikzstyle{cloud} = [draw, ellipse,fill=red!20, node distance=3cm,
    minimum height=2em]
%\usepackage{fancyhdr}
%\pagestyle{fancy}
%\lhead{}
%\rhead{}
%\chead{}
%\lfoot{\small FOOTER}
%\cfoot{}
%\rfoot{\thepage}
%\renewcommand{\headrulewidth}{0pt}
%%%\renewcommand{\footrulewidth}{0pt}
\usepackage{mdframed}

\newtheorem{thm}{Theorem}
\newtheorem{theorem}{Theorem}
\newtheorem{conj}{Conjecture}
\newtheorem{prop}{Proposition}
\newtheorem{cor}{Corollary}

\newmdtheoremenv{mhyp}{Hypothesis} 
\newmdtheoremenv{mthm}{Theorem}
\newmdtheoremenv{mtheorem}{Theorem}
\newmdtheoremenv{mprop}{Proposition}
\newmdtheoremenv{mcor}{Corollary}
\newmdtheoremenv{mlemma}{Lemma}
\newmdtheoremenv{mdefn}{Definition}
\newmdtheoremenv{mmydef}{Definition}
\newmdtheoremenv{mconj}{Conjecture}
\newmdtheoremenv{mex}{Example}
\newmdtheoremenv{mexercise}{Exercise}

\usepackage{tikz}
\usetikzlibrary{calc}

%\usetikzlibrary{graphdrawing}
%\usetikzlibrary{graphs}
%\usegdlibrary{trees}
%\usepackage{morefloats}
\DeclareMathAlphabet\mathbfcal{OMS}{cmsy}{b}{n}

\def \be{\beta}
\def \bb{b}
\def \bx{\boldsymbol{x}}
\def \bW{W}

\def \bz{\boldsymbol{z}}

\def \bt{\boldsymbol{t}}

\def \bb{b}

\def \be{\boldsymbol{e}}

\def \bA{A}

\def \bB{B}

\def \bC{\boldsymbol{C}}

\def\R{{\mathbb R}}

\def\Indic{\mathbbm{1}}

\usepackage{xcolor}

\usepackage{hyperref}
\usepackage{url}

\title{From Hard to Soft:~
Understanding Deep \\ Network Nonlinearities via Vector \\ Quantization and
Statistical Inference}

% Authors must not appear in the submitted version. They should be hidden
% as long as the \iclrfinalcopy macro remains commented out below.
% Non-anonymous submissions will be rejected without review.
\iclrfinalcopy

\author{Randall Balestriero \& Richard G. Baraniuk %\thanks{ Use footnote for providing further information
%about author (webpage, alternative address)---\emph{not} %for acknowledging
%funding agencies.  Funding acknowledgements go at the end of the paper.}
\\
Department of Electrical and Computer Engineering\\
Rice University\\
Houston, TX 77005, USA \\
\texttt{randallbalestriero@gmail.com}
}

% The \author macro works with any number of authors. There are two commands
% used to separate the names and addresses of multiple authors: \And and \AND.
%
% Using \And between authors leaves it to \LaTeX{} to determine where to break
% the lines. Using \AND forces a linebreak at that point. So, if \LaTeX{}
% puts 3 of 4 authors names on the first line, and the last on the second
% line, try using \AND instead of \And before the third author name.

%\iclrfinalcopy % Uncomment for camera-ready version, but NOT for submission.
\begin{document}
\maketitle

\begin{abstract}
Nonlinearity is crucial to the performance of a deep (neural) network (DN).
To date there has been little progress understanding the menagerie of available  nonlinearities, but recently progress has been made on understanding the r\^{o}le played by piecewise affine and convex nonlinearities like the ReLU and absolute value activation functions and max-pooling.
In particular, DN layers constructed from these operations can be interpreted as {\em max-affine spline operators} (MASOs) that have an elegant link to vector quantization (VQ) and $K$-means.
While this is good theoretical progress, the entire MASO approach is predicated on the requirement that the nonlinearities be piecewise affine and convex, which precludes important activation functions like the sigmoid, hyperbolic tangent, and softmax.
{\em This paper extends the MASO framework to these and an infinitely large class of new nonlinearities by linking deterministic MASOs with probabilistic Gaussian Mixture Models (GMMs).}
We show that, under a GMM, piecewise affine, convex nonlinearities like ReLU, absolute value, and max-pooling can be interpreted as solutions to certain natural ``hard'' VQ inference problems, while sigmoid, hyperbolic tangent, and softmax can be interpreted as solutions to corresponding ``soft'' VQ inference problems.
We further extend the framework by hybridizing the hard and soft VQ optimizations to create a $\beta$-VQ inference that interpolates between hard, soft, and linear VQ inference.
A prime example of a $\beta$-VQ DN nonlinearity is the {\em swish} nonlinearity, which offers state-of-the-art performance in a range of computer vision tasks but was developed ad hoc by experimentation.
Finally, we validate with experiments an important assertion of our theory, namely that DN performance can be significantly improved by enforcing orthogonality in its linear filters.

\end{abstract}

\section{Introduction}
\label{sec:intro}

Deep (neural) networks (DNs) have recently come to the fore in a wide range of machine learning tasks, from regression to classification and beyond.
%\cite{krizhevsky2012imagenet,silver2016mastering,van2016wavenet}.
A DN is typically constructed by composing a large number of linear/affine transformations interspersed with 
up/down-sampling operations 
%such as max-pooling and average pooling 
and simple scalar nonlinearities such as the ReLU, absolute value, sigmoid, hyperbolic tangent, etc. \cite{goodfellow2016deep}.
Scalar nonlinearities are crucial to a DN's performance.
Indeed, without nonlinearity, the entire network would collapse to a simple affine transformation.
{\em But to date there has been little progress understanding and unifying the  menagerie of  nonlinearities, with few reasons to choose one over another other than intuition or experimentation.}

Recently, progress has been made on understanding the r\^{o}le played by {\em piecewise affine and convex nonlinearities} like the ReLU, leaky ReLU, and absolute value activations and downsampling operations like max-, average-, and channel-pooling \cite{balestriero2018spline,reportRB}.
In particular, these operations can be interpreted as {\em max-affine spline operators} (MASOs) \cite{magnani2009convex,hannah2013multivariate} that enable a DN to find a locally optimized piecewise affine approximation to the prediction operator given training data.
A spline-based prediction is made in two steps. 
First, given an input signal $\bx$, we determine which region of the spline's partition of the domain (the input signal space) it falls into.
Second, we apply to $\bx$ the fixed (in this case affine) function that is assigned to that partition region to obtain the prediction $\widehat{y}=f(\bx)$.

The key result of \cite{balestriero2018spline,reportRB} is {\em any DN layer constructed from a combination of linear and piecewise affine and convex is a MASO}, and hence the entire DN is merely a composition of MASOs.

MASOs have the attractive property that their partition of the signal space (the collection of multi-dimensional ``knots'') is completely determined by their affine parameters (slopes and offsets).
This provides an elegant link to {\em vector quantization} (VQ) and {\em $K$-means clustering}.
%(pointed out in \cite{magnani2009convex} for the regression setting).
That is, during learning, a DN implicitly constructs a hierarchical VQ of the training data that is then used for spline-based prediction.

This is good progress for DNs based on ReLU, absolute value, and max-pooling, but what about DNs based on classical, high-performing nonlinearities that are neither piecewise affine nor convex like the sigmoid, hyperbolic tangent, and softmax or fresh nonlinearities like the {\em swish} \cite{ramachandran2017searching} that has been shown to outperform others on a range of tasks?

\textbf{Contributions.}~
In this paper, we address this gap in the DN theory by developing a new framework that unifies a wide range of DN nonlinearities and inspires and supports the development of new ones.
{\em The key idea is to leverage the yinyang relationship between deterministic VQ/$K$-means and probabilistic Gaussian Mixture Models (GMMs)} \cite{biernacki2000assessing}. Under a GMM, piecewise affine, convex nonlinearities like ReLU and absolute value can be interpreted as solutions to certain natural {\em hard inference} problems, while sigmoid and hyperbolic tangent can be interpreted as solutions to corresponding {\em soft inference} problems.
We summarize our primary contributions as follows:

\underline{Contribution 1}:
We leverage the well-understood relationship between VQ, $K$-means, and GMMs to propose the {\em Soft MASO} (SMASO) model, a probabilistic GMM that extends the concept of a deterministic MASO DN layer.
Under the SMASO model, {\em hard maximum a posteriori (MAP) inference} of the VQ parameters corresponds to conventional deterministic MASO DN operations that involve piecewise affine and convex functions, such as fully connected and convolution matrix multiplication; ReLU, leaky-ReLU, and absolute value activation; and max-, average-, and channel-pooling.
These operations assign the layer's input signal (feature map) to the VQ partition region  corresponding to the closest centroid in terms of the Euclidean distance,

\underline{Contribution 2}:
A hard VQ inference contains no information regarding the confidence of the VQ region selection, which is related to the distance from the input signal to the region boundary. 
In response, we develop a method for {\em soft MAP inference} of the VQ parameters based on the probability that the layer input belongs to a given VQ region.
{\em Switching from hard to soft VQ inference recovers several classical and powerful nonlinearities and provides an avenue to derive completely new ones.}
We illustrate by showing that the soft versions of ReLU and max-pooling are the sigmoid gated linear unit and softmax pooling, respectively. 
We also find a home for the sigmoid, hyperbolic tangent, and softmax in the framework as a new kind of DN layer where the MASO output is the VQ probability.

%Doing so will naturally give the sigmoid gated linear unit as a SMASO layer based on the ReLU MASO. This will also result in the softmax pooling as the SMASO of max-pooling MASO. With this formulation, we will also be able to obtain sigmoid and hyperbolic tangent nonlinearities as a special case for which the layer outputs the soft-VQ. Furthermore, the softmax nonlinearity will correspond to the soft-VQ of a max-pooling MASO. This finding is intuitive as the hard-VQ of max-pooling MASO corresponds to the class prediction ($\argmax$ over the classes) anyone uses during testing, effectively showing that in current DNs, soft-VQ is used during learning and hard-VQ for testing
    
\underline{Contribution 3}:
We generalize hard and soft VQ to what we call {\em $\beta$-VQ inference}, where $\beta\in (0,1)$ is a free and learnable parameter. 
This parameter interpolates the VQ from linear ($\beta \rightarrow 0$), to probabilistic SMASO ($\beta=0.5$), to deterministic MASO ($\beta \rightarrow 1$).
We show that the $\beta$-VQ version of the hard ReLU activation is the {\em swish} nonlinearity, which offers state-of-the-art performance in a range of computer vision tasks but was developed ad hoc through experimentation \cite{ramachandran2017searching}.
    
%[C4] 
%We show that the sigmoid, hyperbolic tangent and softmax nonlinearities {\em change the context of their input} by outputting not the VQ encoding of the input but rather the VQ region occupation probability.
% WE ALREADY PUT THIS IN C2

\underline{Contribution 4}:
Seen through the MASO lens, current DNs solve a simplistic per-unit (per-neuron), independent VQ optimization problem at each layer.
In response, we extend the SMASO GMM to a {\em factorial GMM} that that supports jointly optimal VQ optimization across all units in a layer.
Since the factorial aspect of the new model would make na\"{i}ve VQ inference exponentially computationally complex, {\em we develop a simple sufficient condition under which a we can achieve efficient, tractable, jointly optimal VQ inference.}
The condition is that the linear ``filters'' feeding into any nonlinearity should be {\em orthogonal}. 
We propose two simple strategies to learn approximately and truly orthogonal weights and show on three different datasets that both offer significant improvements in classification performance.
Since orthogonalization can be applied to an arbitrary DN, this result and our theoretical understanding are of independent interest.

This paper is organized as follows. 
After reviewing the theory of MASOs and VQ for DNs in Section \ref{sec:background}, we formulate the GMM-based extension to SMASOs in Section \ref{sec:GMM}. 
Section~\ref{sec:hybrid} develops the hybrid $\beta$-VQ inference with a special case study on the swish nonlinearity. 
Section~\ref{sec:orthogonal_filters} extends the SMASO to a factorial GMM and shows the power of DN orthogonalization.
We wrap up in Section~\ref{sec:conc} with directions for future research.
Proofs of the various results appear in several appendices in the Supplementary Material.

\section{Background on Max-Affine Splines and Deep Networks}
\label{sec:background}

We first briefly review {\em max-affine spline operators} (MASOs) in the context of understanding the inner workings of DNs \cite{balestriero2018spline,reportRB}. 
A MASO is an operator $S[A,B]:\mathbb{R}^{D}\rightarrow \mathbb{R}^{K}$ that maps an input vector of length $D$ into an output vector of length $K$ by leveraging $K$ independent {\em max-affine splines} \cite{magnani2009convex,hannah2013multivariate}, each with $R$ regions, which are piecewise affine and convex mappings.
The MASO parameters consist of the ``slopes'' $A \in \mathbb{R}^{K \times R \times D}$ and the ``offsets/biases'' $B \in \mathbb{R}^{K\times R}$.
See Appendix~\ref{sec:appendix_DN_background} for the precise definition.
Given the input $\bz^{(\ell-1)}\in\R^{D^{(\ell-1)}}$ (i.e., $D=D^{(\ell-1)}$) and parameters $A^{(\ell)},B^{(\ell)}$, a MASO produces the output $\bz^{(\ell)}\in\R^{D^{(\ell)}}$ (i.e., $K=D^{(\ell-1)}$) via 
\begin{align}
[\bz^{(\ell)}]_k
=
\left[S[A^{(\ell)},B^{(\ell)}](\bz^{(\ell-1)})\right]_k
=
\max_{r=1,\dots,R^{(\ell)}} \left(\left\langle [A^{(\ell)}]_{k,r,.}, \bz^{(\ell-1)} \right\rangle+[B^{(\ell)}]_{k,r} \right),
\label{eq:big1}
\end{align}
where $[\bz^{(\ell)}]_k$ denotes the $k^{\rm th}$ dimension of $\bz^{(\ell)}$ and $R^{(\ell)}$ is the number of regions in the splines' partition of the input space $\R^{D^{(\ell-1)}}$.

An important consequence of (\ref{eq:big1}) is that a MASO is completely determined by its slope and offset parameters without needing to specify the partition of the input space (the ``knots'' when $D=1$).
Indeed, solving (\ref{eq:big1}) automatically computes an optimized partition of the input space $\R^D$ that is equivalent to a {\em vector quantization} (VQ) \cite{nasrabadi1988image,gersho2012vector}. 
%The MAS VQ maps the input $\bx$ to an integer {\em region index} $t \in \{1,\dots,R \}$ corresponding to the $\argmax$ over $r=1,\dots,R$ in (\ref{eq:big01}) that selects the optimal parameters of the affine mapping as in $z=\langle [\alpha]_{t,\bigcdot}\,, (\bx + \bb ) \rangle $. 
%The MASO VQ maps $\bx$ to the vector of integers $t$, where $[t]_k \in \{1,\dots,R \}$ is the region index for the $k^{\rm th}$ MAS in (\ref{eq:big1_simple}).
We can make the VQ aspect explicit by rewriting (\ref{eq:big1}) in terms of the {\em Hard-VQ} (HVQ) matrix $T^{(\ell)}_H \in \mathbb{R}^{D^{(\ell)}\times R^{(\ell)}}$. 
This VQ-matrix contains $D^{(\ell)}$ stacked one-hot row vectors, each with the one-hot position at index $[\bt^{(\ell)}]_k \in \{1,\dots,R\}$ corresponding to the $\argmax$ over $r=1,\dots,R^{(\ell)}$ of (\ref{eq:big1}) 
\begin{align}
[\bz^{(\ell)}]_k
=
\sum_{r=1}^{R^{(\ell)}} [T^{(\ell)}_H]_{k,r}
\left(\left\langle [A^{(\ell)}]_{k,r,.}, \bz^{(\ell-1)} \right\rangle+[B^{(\ell)}]_{k,r}\right).
\label{eq:big12}
\end{align}
We retrieve (\ref{eq:big1}) from (\ref{eq:big12}) by noting that $[\bt^{(\ell)}]_{k,}=\argmax_{r=1,\dots,R^{(\ell)}}(\langle [A^{(\ell)}]_{k,r,.}, \bz^{(\ell-1)} \rangle+[B^{(\ell)}]_{k,r})$.

The key background result for this paper is that the {\em layers} of very large class of DN are MASOs.
Hence, such a DN is a composition of MASOs.

\begin{thm}
\label{thm1}
Any DN layer comprising a linear operator (e.g., fully connected or convolution) composed with a convex and piecewise affine operator (such as a ReLU, leaky-ReLU, or absolute value activation; max/average/channel-pooling; maxout; all with or without skip connections) is a MASO \cite{balestriero2018spline,reportRB}. 
\end{thm}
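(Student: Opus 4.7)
The plan is to reduce the claim to the single algebraic fact that a max of affine functions of an affine function of $\bz^{(\ell-1)}$ is itself a max of affine functions of $\bz^{(\ell-1)}$. Concretely, if $g(\bu) = \max_{r=1,\dots,R} \langle \vc_r, \bu\rangle + d_r$ and $\bu = W\bz + \vb$, then
\begin{equation*}
g(W\bz + \vb) = \max_{r=1,\dots,R} \langle W^{\top}\vc_r,\, \bz\rangle + (d_r + \langle \vc_r, \vb\rangle),
\end{equation*}
which is of the form (\ref{eq:big1}) with slopes $W^{\top}\vc_r$ and offsets $d_r + \langle \vc_r,\vb\rangle$. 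Everything else is bookkeeping to check that each of the listed nonlinearities is a max of affines per output coordinate.

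First, I would write each admissible nonlinearity in max-affine form coordinate by coordinate: ReLU is $\max(0,u)$ ($R=2$), leaky-ReLU is $\max(\alpha u, u)$, absolute value is $\max(-u,u)$, max-pooling over a window $\mathcal{W}_k$ is $\max_{r\in\mathcal{W}_k} u_r$, and maxout over $R$ branches is $\max_r \langle \va_r, \bu\rangle + b_r$. Average-pooling and any pure linear/affine operator fit the template trivially with $R=1$. Channel-pooling is structurally identical to max-pooling but with the window indexing feature maps rather than spatial positions.

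Second, I would compose the linear operator of the layer (fully connected or convolution, both of which are affine in $\bz^{(\ell-1)}$) with each nonlinearity using the algebraic identity above. For each output index $k$ this yields an expression of the form $\max_{r=1,\dots,R^{(\ell)}} \langle [A^{(\ell)}]_{k,r,:}, \bz^{(\ell-1)}\rangle + [B^{(\ell)}]_{k,r}$, which is exactly (\ref{eq:big1}) and hence identifies the layer as $S[A^{(\ell)},B^{(\ell)}]$. Skip connections add an affine term $E\bz^{(\ell-1)} + \vf$ inside (or outside) the max; absorbing this affine term into each of the $R$ affine pieces gives modified slopes $[A^{(\ell)}]_{k,r,:} + [E]_{k,:}$ and offsets $[B^{(\ell)}]_{k,r} + [\vf]_k$, preserving the MASO form.

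The main obstacle is not conceptual but notational: one must verify that the indexing conventions used for $A^{(\ell)} \in \mathbb{R}^{K\times R\times D}$ accommodate operators that aggregate across coordinates (max-/channel-pooling, maxout), because in those cases the $R$ affine pieces arise from different rows of the preceding linear operator rather than from a scalar nonlinearity applied post hoc. Handling this uniformly requires defining, for each output unit $k$, the relevant set of affine pre-activations contributing to its max, and then identifying the slope tensor $[A^{(\ell)}]_{k,r,:}$ with the corresponding row of the composed linear map. Once this dictionary is fixed, the verification for each listed operation (and combinations via skip connections) reduces to one line of algebra, completing the proof.
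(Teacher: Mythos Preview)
Your proposal is correct and is precisely the standard argument: write each listed nonlinearity as a per-coordinate max of affines, then absorb the preceding linear/affine layer into the slopes and offsets via the identity $g(W\bz+\vb)=\max_r\langle W^\top\vc_r,\bz\rangle + (d_r+\langle\vc_r,\vb\rangle)$, with skip connections handled by adding the extra affine term into every branch.

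The paper itself does not prove Theorem~\ref{thm1}; it is imported from \cite{balestriero2018spline,reportRB}, and Appendix~\ref{sec:appendix_DN_background} only restates the MASO definition and the $T^{(\ell)}_H$ reformulation without actually exhibiting $A^{(\ell)},B^{(\ell)}$ for the listed operators. So there is nothing to compare against in this paper beyond the citation, and your write-up is already more explicit than what appears here. The only cosmetic point is that the paper's statement of a MASO in (\ref{eq:MASO}) uses a common $R$ across all output units $k$, whereas max-pooling with variable window sizes or maxout with different branch counts would naturally want $R$ to depend on $k$; you implicitly handle this by padding (repeating an affine piece), which is fine but worth one sentence if you want the bookkeeping airtight.
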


Appendix~\ref{sec:appendix_DN_background} provides the parameters $A^{(\ell)},B^{(\ell)}$ for the MASO corresponding to the $\ell^{\rm th}$ layer of any DN constructed from linear plus piecewise affine and convex components.
Given this connection, we will identify $\bz^{(\ell-1)}$ above as the input (feature map) to the MASO DN layer and $\bz^{(\ell)}$ as the output (feature map).
We also identify $[\bz^{(\ell)}]_k$ in (\ref{eq:big1}) and (\ref{eq:big12}) as the output of the $k^{\rm th}$ {\em unit} (aka neuron) of the $\ell^{\rm th}$ layer.
MASOs for higher-dimensional tensor inputs/outputs are easily developed by flattening.

\section{Max-Affine Splines meet Gaussian Mixture Models}
\label{sec:GMM}

The MASO/HVQ connection provides deep insights into how a DN clusters and organizes signals layer by layer in a hierarchical fashion  \cite{balestriero2018spline,reportRB}. 
However, the entire approach requires that the nonlinearities be piecewise affine and convex, which precludes important activation functions like the sigmoid, hyperbolic tangent, and softmax.
{\em The goal of this paper is to extend the MASO analysis framework of Section~\ref{sec:background} to these and an infinitely large class of other nonlinearities by linking deterministic MASOs with probabilistic Gaussian Mixture Models (GMMs).}

\subsection{From MASO to GMM via $K$-Means}

For now, we focus on a single unit $k$ from layer $\ell$ of a MASO DN, which contains both linear and nonlinear operators; we generalize below in Section~\ref{sec:orthogonal_filters}.
The key to the MASO mechanism lies in the VQ variables $[\bt^{(\ell)}]_{k}~\forall k$, since they fully determine the output via (\ref{eq:big12}). For a special choice of bias, the VQ variable computation is equivalent to the {\em $K$-means} algorithm
\cite{balestriero2018spline,reportRB}. 

\begin{prop}
\label{prop:kmean}
Given $-\frac{1}{2}\big\|\big[A^{(\ell)}\big]_{k,r,.}\big\|^2 =\,\big[B^{(\ell)}\big]_{k,r}$, the MASO VQ partition corresponds to a $K$-means clustering\footnote{It would be more accurate to call this $R^{(\ell)}$-means clustering in this case.} 
with centroids $\big[A^{(\ell)}\big]_{k,r,.}$ computed via 
$
    [\widehat{\bt^{(\ell)}}]_{k}=  \argmin\limits_{r=1,\dots,R}  \big\|\big[A^{(\ell)}\big]_{k,r,.}-\bz^{(\ell-1)} \big\|^2$ .
\end{prop}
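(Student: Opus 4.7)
The plan is to show directly that, under the stated choice of bias, the $\argmax$ that defines the MASO hard-VQ assignment coincides with the $\argmin$ of Euclidean distance to the slope vectors $[A^{(\ell)}]_{k,r,.}$. This is essentially a single identity obtained by completing the square, so the proof should be short and purely algebraic; no separate optimality or uniqueness argument is needed.

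First I would recall from equations (\ref{eq:big1})--(\ref{eq:big12}) that the hard-VQ index at unit $k$ is
\[
[\bt^{(\ell)}]_k \;=\; \argmax_{r=1,\dots,R^{(\ell)}} \Big( \big\langle [A^{(\ell)}]_{k,r,.},\, \bz^{(\ell-1)} \big\rangle + [B^{(\ell)}]_{k,r} \Big).
\]
Substituting the hypothesis $[B^{(\ell)}]_{k,r} = -\tfrac{1}{2}\big\| [A^{(\ell)}]_{k,r,.} \big\|^2$ turns the objective into
\[
\big\langle [A^{(\ell)}]_{k,r,.},\, \bz^{(\ell-1)} \big\rangle \;-\; \tfrac{1}{2}\big\| [A^{(\ell)}]_{k,r,.} \big\|^2.
\]

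Next I would complete the square by adding the $r$-independent constant $-\tfrac{1}{2}\|\bz^{(\ell-1)}\|^2$, which does not affect the $\argmax$. The resulting expression equals $-\tfrac{1}{2}\big\| [A^{(\ell)}]_{k,r,.} - \bz^{(\ell-1)} \big\|^2$, so maximizing it over $r$ is the same as minimizing $\big\| [A^{(\ell)}]_{k,r,.} - \bz^{(\ell-1)} \big\|^2$. This yields exactly the claimed $K$-means nearest-centroid rule with centroids $[A^{(\ell)}]_{k,r,.}$, finishing the proof.

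There is no real obstacle here: the only subtle point worth a sentence of commentary is that the bias condition is precisely what is needed to cancel the $\|[A^{(\ell)}]_{k,r,.}\|^2$ cross-term produced by expanding the squared Euclidean distance, which is why the specific factor $-\tfrac{1}{2}$ appears. I would also briefly note that the identity holds per-unit $k$ and per-region $r$, so the argument applies uniformly across the $D^{(\ell)}$ rows of the HVQ matrix $T^{(\ell)}_H$.
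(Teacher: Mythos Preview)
Your proposal is correct. The paper does not actually include its own proof of Proposition~\ref{prop:kmean} (it attributes the result to prior work \cite{balestriero2018spline,reportRB}), but the completing-the-square argument you give is exactly the standard derivation and is consistent with the algebraic manipulations the authors use elsewhere (e.g., in the proof of Theorem~\ref{thm2} in Appendix~\ref{prooftheorem2}), so nothing further is needed.
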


For example, consider a layer $\ell$ using a ReLU activation function. 
Unit $k$ of that layer partitions its input space using a $K$-means model with $R^{(\ell)}=2$ centroids: the origin of the input space and the unit layer parameter $[A^{(\ell)}]_{k,1,\bigcdot}$. 
The input is mapped to the partition region corresponding to the closest centroid in terms of the Euclidean distance, and the corresponding affine mapping for that region is used to project the input and produce the layer output as in (\ref{eq:big12}).

We now leverage the well-known relationship between $K$-means and Gaussian Mixture Models (GMMs) \cite{bishop2006pattern} 
to GMM-ize the deterministic VQ process of max-affine splines. 
As we will see, the constraint on the value of $\big[B^{(\ell)}\big]_{k,r}$ in Proposition \ref{prop:kmean} will be relaxed thanks to the GMM's ability to work with a nonuniform prior over the regions (in contrast to $K$-means).

To move from a deterministic MASO model to a probabilistic GMM, we reformulate the HVQ selection variable $[\bt^{(\ell)}]_{k}$ as an unobserved categorical variable $[\bt^{(\ell)}]_{k}\sim \mathcal{C}at([\pi^{(\ell)}]_{k,\bigcdot})$ with parameter $[\pi^{(\ell)}]_{k,\bigcdot}\in \bigtriangleup_{R^{(\ell)}}$ and $\bigtriangleup_{R^{(\ell)}}$ the simplex of dimension $R^{(\ell)}$. 
Armed with this, we define the following generative model for the layer input $\bz^{(\ell-1)}$ as a mixture of $R^{(\ell)}$ Gaussians with mean $[A^{(\ell)}]_{k,r,.} \in \mathbb{R}^{D^{(\ell-1)}}$ and identical isotropic covariance with parameter $\sigma^2$
\begin{align}
    \bz^{(\ell-1)} = \sum_{r=1}^{R^{(\ell)}} \Indic\left(\big[\bt^{(\ell)}\big]_k=r\right)
    \big[A^{(\ell)}\big]_{k,r,\bigcdot}+\epsilon,
    \label{eq:GMM}
\end{align}
with $\epsilon \sim \mathcal{N}(0,I\sigma^2)$. 
Note that this GMM generates an independent vector input $\bz^{(\ell-1)}$ for every unit $k=1,\dots,D^{(\ell)}$ in layer $\ell$.
For reasons that will become clear below in Section~\ref{sec:softinf}, we will refer to the GMM model (\ref{eq:GMM}) as the {\em Soft MASO} (SMASO) model.
We develop a joint, factorial model for the entire MASO layer (and not just one unit) in Section~\ref{sec:orthogonal_filters}.

\subsection{Hard VQ Inference}

Given the GMM (\ref{eq:GMM}) and an input $\bz^{(\ell-1)}$, we can compute a {\em hard inference} of the optimal VQ selection variable $[\bt^{(\ell)}]_k$ via the maximum a posteriori (MAP) principle 
\begin{align}
    [\widehat{\bt^{(\ell)}}]_k
    =
    \argmax_{t=1,\dots,R^{(\ell)}}p(t|\bz^{(\ell-1)}).
    \label{eq:MAP_unit}
\end{align}
The following result is proved in Appendix~\ref{prooftheorem2}.

\begin{thm}
\label{thm2}
The hard, MAP inference of the latent selection variable $[\widehat{\bt^{(\ell)}}]_k$ given in (\ref{eq:MAP_unit}) can be computed via the MASO HVQ (\ref{eq:big1})
\begin{align}
    [\widehat{\bt^{(\ell)}}]_k
    =
    \argmax_{ r=1,\dots,R^{(\ell)} } \langle [\bA^{(\ell)}]_{k,t,\bigcdot},\bz^{(\ell-1)} \rangle+[\bB^{(\ell)}]_{k,t} 
    \quad \forall A^{(\ell)}~\forall B^{(\ell)}
    \label{eq:tmasoeq}
\end{align}
with $\sigma^2=1$ and $[\pi^{(\ell)}]_{k,t}= \frac{\exp([B^{(\ell)}]_{k,r}+\frac{1}{2}\| [A^{(\ell)}]_{k,r,\bigcdot} \|^2)}{\sum_r \exp([B^{(\ell)}]_{k,r}+\frac{1}{2}\| [A^{(\ell)}]_{k,r,\bigcdot} \|^2)},t=1,\dots,R^{(\ell)}$. 
The optimal HVQ selection matrix is given by $[\widehat{T^{(\ell)}_H}]_{k,r}=\Indic\big(r=[\widehat{\bt^{(\ell)}}]_k\big)$.
\end{thm}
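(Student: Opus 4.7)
My plan is to prove Theorem~\ref{thm2} by direct manipulation of the MAP objective: I will take the log-posterior for the categorical variable $[\bt^{(\ell)}]_k$ under the GMM (\ref{eq:GMM}), show it is affine in $\bz^{(\ell-1)}$ up to an $r$-independent term, and then match coefficients against the MASO HVQ expression (\ref{eq:big1}) to recover the stated identification of $\sigma^2$ and $[\pi^{(\ell)}]_{k,r}$.

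Concretely, I would proceed as follows. First, apply Bayes' rule to obtain
\begin{align}
[\widehat{\bt^{(\ell)}}]_k
&= \argmax_{r=1,\dots,R^{(\ell)}} \log p(\bz^{(\ell-1)}\mid [\bt^{(\ell)}]_k=r) + \log [\pi^{(\ell)}]_{k,r}.
\end{align}
Since the likelihood under (\ref{eq:GMM}) is $\mathcal{N}\big([A^{(\ell)}]_{k,r,\bigcdot},\, I\sigma^2\big)$, its log equals $-\tfrac{1}{2\sigma^2}\|\bz^{(\ell-1)}-[A^{(\ell)}]_{k,r,\bigcdot}\|^2$ up to an $r$-independent normalizer. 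Expanding the squared norm and dropping the $\|\bz^{(\ell-1)}\|^2$ term (which is constant in $r$) gives the equivalent objective
\begin{align}
\tfrac{1}{\sigma^2}\,\langle [A^{(\ell)}]_{k,r,\bigcdot},\, \bz^{(\ell-1)}\rangle \;-\; \tfrac{1}{2\sigma^2}\,\|[A^{(\ell)}]_{k,r,\bigcdot}\|^2 \;+\; \log [\pi^{(\ell)}]_{k,r}.
\end{align}

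Second, I would set $\sigma^2=1$ so that the inner-product term matches the slope term of (\ref{eq:big1}) exactly. Comparing the remaining $r$-dependent terms with $[B^{(\ell)}]_{k,r}$ gives the requirement $\log [\pi^{(\ell)}]_{k,r} = [B^{(\ell)}]_{k,r} + \tfrac{1}{2}\|[A^{(\ell)}]_{k,r,\bigcdot}\|^2 + C$ for some $r$-independent constant $C$. Since $[\pi^{(\ell)}]_{k,\bigcdot}$ must lie on the simplex $\bigtriangleup_{R^{(\ell)}}$, the constant $C$ is forced by the normalization $\sum_r [\pi^{(\ell)}]_{k,r}=1$, yielding exactly the softmax formula in the theorem statement. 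Crucially, $C$ does not depend on $r$, so it vanishes inside the $\argmax$, and the resulting optimizer coincides with (\ref{eq:tmasoeq}) for every choice of $A^{(\ell)}$ and $B^{(\ell)}$. The identification $[\widehat{T^{(\ell)}_H}]_{k,r}=\Indic\big(r=[\widehat{\bt^{(\ell)}}]_k\big)$ is then immediate from the definition of the HVQ matrix below (\ref{eq:big12}).

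The computation is essentially mechanical, so there is no serious analytic obstacle; the only subtle point is bookkeeping. One must verify that the softmax normalization produces a valid categorical prior (trivially positive and summing to one), and that the additive log-normalizer is genuinely $r$-independent so that it drops out of the $\argmax$ for arbitrary $A^{(\ell)},B^{(\ell)}$. This is why the statement holds for \emph{every} choice of slope/offset parameters, rather than requiring the constraint $[B^{(\ell)}]_{k,r}=-\tfrac{1}{2}\|[A^{(\ell)}]_{k,r,\bigcdot}\|^2$ that was needed for the $K$-means correspondence in Proposition~\ref{prop:kmean}: the GMM's nonuniform prior $[\pi^{(\ell)}]_{k,\bigcdot}$ absorbs the discrepancy, which is precisely the extra flexibility promised in the paragraph preceding (\ref{eq:GMM}).
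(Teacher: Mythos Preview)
Your proposal is correct and is essentially the same argument as the paper's, which also expands the Gaussian log-likelihood, drops the $r$-independent $\|\bz^{(\ell-1)}\|^2$ term, and identifies $\log[\pi^{(\ell)}]_{k,r}$ with $[B^{(\ell)}]_{k,r}+\tfrac{1}{2}\|[A^{(\ell)}]_{k,r,\bigcdot}\|^2$ up to the softmax normalizer. The only cosmetic difference is direction: the paper starts from the MASO $\argmax$ and rewrites it into the MAP form, whereas you start from the MAP and match it to the MASO; the chain of equalities is identical.
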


Note in Theorem~\ref{thm2} that the bias constraint of Proposition~\ref{prop:kmean} (which can be interpreted as imposing a uniform prior $[\pi^{(\ell)}]_{k,\bigcdot}$) is completely relaxed.
%
%The above theorem reinforces the link between the internal VQ of max-affine splines and the mixture modeling of the layer input $\bz^{(\ell-1)}$. Furthermore, it sheds light on the meaning of the parameters, especially the relationship between $\frac{1}{2}\|[A^{(\ell)}]_{k,r,.} \|^2$ and $[B^{(\ell)}]_{k,r}$ as controlling the prior over the cluster probabilities. 
%We will propose in Sec.~\ref{sec:interpret2} a detailed analysis of this finding as well as its implication for understanding the MASO input space partitioning. We first propose to derive the soft inference of the GMM-MASO.

HVQ inference of the selection matrix sheds light on some of the drawbacks that affect any DN employing piecewise affine, convex activation functions.
First, during gradient-based learning, the gradient will propagate back only through the activated VQ regions that correspond to the few 1-hot entries in $T^{(\ell)}_H$. 
The parameters of other regions will not be updated; this is known as the ``dying neurons phenomenon'' \cite{trottier2017parametric,agarap2018deep}. 
Second, the overall MASO mapping is continuous but not differentiable, which leads to unexpected gradient jumps during learning.
Third, the HVQ inference contains no information regarding the confidence of the VQ region selection, which is related to the distance of the query point to the region boundary. 
As we will now see, this extra information can be very useful and gives rise to a range of classical and new activation functions.

%Recalling (\ref{eq:big12}), the MASO of at unit $k$ produces it output by via the convex combination (with $[T^{(\ell)}_H]_{k,.}$) of the per region affine mappings. As the the hard-VQ vector consists of a one-hot vector, it induces that only the correct region affine mapping is used to project the input $\bz^{(\ell-1)}$. This induces some limitations. First, there is no explicit notion of how close an input is to the region and thus the \textit{certainty} of using the selection affine mapping. Second, the non-selected mapping parameters are completely cut from the output and thus not updated during learning. In the case of ReLU, this implies the \textit{dying neuron} phenomenon \cite{trottier2017parametric,agarap2018deep}. Third, the mapping is continuous but not differentiable which is synonym of unexpected gradient jumps during learning. We now propose a simple strategy to alleviate the above points.

\subsection{Soft VQ Inference}
\label{sec:softinf}

%In Thm.~\ref{thm2}, we showed the equivalence between hard inference (MAP) and hard-VQ. We now leverage the expectation operator and the latent variable inference process \footnote{This latent variable inference process relies on the same principle as the one of Expectation-Maximization algorithm  during the E step} to derive the soft-inference (resp. soft-VQ) for GMM-MASO (resp. MASO). 
%Recall that the Shannon Entropy\cite{shannon1959mathematical} of a discrete probability distribution $p$  is $H(p)=-\sum_{x}p(x)\log(p(x))$.

We can overcome many of the limitations of HVQ inference in DNs by replacing the 1-hot entries of the HVQ selection matrix with the probability that the layer input belongs to a given VQ region
\begin{align}
    [\widehat{T^{(\ell)}_S}]_{k,r}
    =
    p\left([t^{(\ell)}]_k=r \:|\:
    \bz^{(\ell-1)}\right)
    =
    \frac{\exp\left(\left\langle [\bA^{(\ell)}]_{k,r,\bigcdot},\bz^{(\ell-1)} \right\rangle+[\bB^{(\ell)}]_{k,r,\bigcdot}\right)
    }{
    \sum_r \exp\left(\left\langle [\bA^{(\ell)}]_{k,r,\bigcdot},\bz^{(\ell-1)} \right\rangle+[\bB^{(\ell)}]_{k,r}\right)},
%
    %[\widehat{T^{(\ell)}_S}]_{k,r}=\frac{\exp(\frac{-1}{2\sigma^2}\| \bz^{(\ell-1)}-[\bA^{(\ell)}]_{k,r,.}\|^2+\log(\pi_r))}{\sum_r \exp(\frac{-1}{2\sigma^2}\| \bz^{(\ell-1)}-[\bA^{(\ell)}]_{k,r,.}\|^2+\log(\pi_r))}=\frac{\exp(\langle [\bA^{(\ell)}]_{k,r,.},\bz^{(\ell-1)} \rangle+[\bB^{(\ell)}]_{k,r,.})}{\sum_r \exp(\langle [\bA^{(\ell)}]_{k,r,.},\bz^{(\ell-1)} \rangle+[\bB^{(\ell)}]_{k,r})},
    \label{eq:optimalEM}
\end{align}
which follows from the simple structure of the GMM.
This corresponds to a {\em soft inference} of the categorical variable $[\bt^{(\ell)}]_k$.
Note that $T^{(\ell)}_S \rightarrow T^{(\ell)}_H$ as the noise variance in (\ref{eq:GMM}) $\rightarrow 0$.
Given the SVQ selection matrix, the MASO output is still computed via (\ref{eq:big12}). 
The SVQ matrix can be computed indirectly from an entropy-penalized MASO optimization; the following is proved in Appendix~\ref{prooftheorem3}.

\begin{prop}
\label{thm:entropy_reg}
The entries of the SVQ selection matrix $[\widehat{T^{(\ell)}_S}]_{k,\bigcdot}$ from (\ref{eq:optimalEM}) solve the following entropy-penalized maximization, where $H(\cdot)$ is the Shannon entropy\footnote{The observant reader will recognize this as the E-step of the GMM's EM learning algorithm.}
\begin{equation}
   [\widehat{T^{(\ell)}_S}]_{k,\bigcdot}
   =
   \argmax_{t \in \bigtriangleup_{R^{(\ell)}_k} } \sum_{r=1}^{R^{(\ell)}_k} [t]_r 
   \left(\left\langle [\bA^{(\ell)}]_{k,r,\bigcdot},\bz^{(\ell-1)} \right\rangle+[\bB^{(\ell)}]_{k,r}\right)+H(t).
   \label{eq:softEnt}
\end{equation}
\end{prop}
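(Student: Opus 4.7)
The plan is to recognize (\ref{eq:softEnt}) as a strictly concave maximization over the probability simplex $\bigtriangleup_{R^{(\ell)}_k}$ and then extract the optimizer in closed form via a Lagrangian/KKT argument. The objective decomposes as a linear term in $t$ plus the Shannon entropy $H(t)=-\sum_r[t]_r\log[t]_r$, which is strictly concave on the interior of the simplex. Since the gradient of $H$ diverges to $+\infty$ as any coordinate tends to $0^+$, the unique maximizer must lie in the relative interior of $\bigtriangleup_{R^{(\ell)}_k}$, so only the equality constraint $\sum_r[t]_r=1$ is active.

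Attaching a single multiplier $\lambda$ for that equality and using $\partial_{[t]_r}H(t)=-\log[t]_r-1$, the stationarity condition for
$$\Ls(t,\lambda)=\sum_{r=1}^{R^{(\ell)}_k}[t]_r\Bigl(\bigl\langle[\bA^{(\ell)}]_{k,r,\bigcdot},\bz^{(\ell-1)}\bigr\rangle+[\bB^{(\ell)}]_{k,r}\Bigr)+H(t)-\lambda\Bigl(\textstyle\sum_{r}[t]_r-1\Bigr)$$
yields $\log[t]_r=\bigl\langle[\bA^{(\ell)}]_{k,r,\bigcdot},\bz^{(\ell-1)}\bigr\rangle+[\bB^{(\ell)}]_{k,r}-1-\lambda$. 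Exponentiating and then choosing $\lambda$ to enforce $\sum_r[t]_r=1$ absorbs the factor $e^{1+\lambda}$ into a normalizing denominator, reproducing exactly the softmax expression (\ref{eq:optimalEM}).

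Finally, strict concavity of the penalized objective on $\bigtriangleup_{R^{(\ell)}_k}$ promotes this KKT stationary point to the unique global maximizer, giving the claim. The only subtlety worth spelling out carefully, and really the only place that requires more than routine calculus, is the boundary behavior of the entropy term: without it, one would have to keep track of nonnegativity multipliers and verify no coordinate collapses to zero. The $-\log[t]_r-1$ blowup at the boundary makes those multipliers identically zero and justifies treating (\ref{eq:softEnt}) as an unconstrained problem on the open simplex, modulo the single affine constraint $\mathbf{1}^\top t=1$. Beyond that observation, the proof is a differentiate-and-match computation.
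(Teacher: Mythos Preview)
Your proposal is correct and follows essentially the same route as the paper: both set up a Lagrangian with a single multiplier for the simplex equality constraint, differentiate to obtain $[t]_r\propto\exp(\langle[\bA^{(\ell)}]_{k,r,\bigcdot},\bz^{(\ell-1)}\rangle+[\bB^{(\ell)}]_{k,r})$, and then normalize. Your treatment is slightly more careful than the paper's in explicitly arguing why the nonnegativity constraints are inactive (via the entropy-gradient blowup at the boundary) and in invoking strict concavity for uniqueness, but the underlying mechanics are identical.
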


%The above result is natural as the posterior probability used in $T^{(\ell)}_S$ can also be recovered from a maximization problem with Entropy regularization as done in the E-step of the EM algorithm.
%We now demonstrate how this finding and the use of the SVQ matrix in place of the HVQ matrix in the MASO mapping of (\ref{eq:big12}) leads to new insights in various DN nonlinearities.
\iffalse
\begin{proof}
We denote $c_r=\langle a_r,x\rangle+b_r$, we then obtain the constrained optimization problem solve via KKT and Lagrange multipliers in this order has we have strong duality we solve as follows
\begin{align*}
    \mathcal{L}(s)=&\beta \sum_r c_rs_r+(1-\beta)H(y)\\
    \frac{\partial \mathcal{L}}{\partial s_p}=&\beta c_p-(1-\beta)\log(s_p)-(1-\beta)+\gamma\\
    \frac{\partial \mathcal{L}}{\partial \gamma}=&\sum_r s_r-1\\
    \frac{\partial \mathcal{L}}{\partial s_p}=0 \implies & s_p=e^{\frac{\beta}{1-\beta}-1+\frac{\beta}{1-\beta}c_p}\\
    \frac{\partial \mathcal{L}}{\partial \gamma}=0 \implies & \gamma = (1-\beta)\left( -\log(\sum_r e^{\frac{\beta}{1-\beta}c_r})+1 \right)\\
    \implies & s_p=\frac{e^{\frac{\beta}{1-\beta}c_p}}{\sum_r e^{\frac{\beta}{1-\beta}c_r}},
\end{align*}
By a straightforward second order computation one can see that this corresponds to a local maximum. Being the only critical point, it corresponds to a global maximum.
\end{proof}
\fi

%We now leverage this extension to reformulate known nonlinearities by switching their internal hard-VQ into a soft-VQ.

\subsection{Soft VQ MASO Nonlinearities}

Remarkably, switching from HVQ to SVQ MASO inference recovers several classical and powerful nonlinearities and provides an avenue to derive completely new ones.
Given a set of MASO parameters $A^{(\ell)},B^{(\ell)}$ for calculating the layer-$\ell$ output of a DN via (\ref{eq:big1}), we can derive two distinctly different DNs: one based on the HVQ inference of (\ref{eq:tmasoeq}) and one based on the SVQ inference of (\ref{eq:optimalEM}).
The following results are proved in Appendix~\ref{proofproposition1}.

%With the developed hard and soft inference we can now provide some intuitions behind some common nonlinearities used in DNs. Furthermore, we can extend the applicability of the MASO formulation to sigmoid and hyperbolic tangent DNs allowing the tools and analysis of \cite{reportRB} to be applied. We will also show that the last layer of DNs with softmax nonlinearities, usually considered as a separate linear classifier, corresponds to a MASO layer with $1$ output unit and the number of regions $R^{(L)}$ to be equal to $C$, the number of classes.

\begin{prop}
The MASO parameters $A^{(\ell)},B^{(\ell)}$ that induce the ReLU activation under HVQ induce the {\em sigmoid gated linear unit} \cite{elfwing2018sigmoid} under SVQ.
\end{prop}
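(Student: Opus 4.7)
The plan is to explicitly identify the MASO parameters that realize ReLU, plug them into the SVQ formulas, and verify that the resulting expression coincides with the sigmoid gated linear unit $x \mapsto x\,\sigma(x)$.

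First I would recall the standard MASO encoding of ReLU. ReLU composed with a linear map $\langle \bw_k,\bz^{(\ell-1)}\rangle + b_k$ is piecewise affine and convex with $R^{(\ell)}=2$ regions, and is realized by the MASO parameters
\begin{align*}
[A^{(\ell)}]_{k,1,\bigcdot} &= \bw_k, & [B^{(\ell)}]_{k,1} &= b_k,\\
[A^{(\ell)}]_{k,2,\bigcdot} &= \mathbf{0}, & [B^{(\ell)}]_{k,2} &= 0.
\end{align*}
A quick sanity check via Theorem~\ref{thm1} / equation~(\ref{eq:big1}) shows that the HVQ output $\max(\langle \bw_k,\bz^{(\ell-1)}\rangle+b_k,\,0)$ is exactly ReLU, so these parameters satisfy the hypothesis of the proposition.

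Next I would substitute the same parameters into the SVQ expressions~(\ref{eq:optimalEM}) and~(\ref{eq:big12}). Writing $u := \langle \bw_k,\bz^{(\ell-1)}\rangle + b_k$, the soft selection probabilities become
\begin{align*}
[\widehat{T^{(\ell)}_S}]_{k,1} = \frac{e^{u}}{e^{u}+e^{0}} = \sigma(u),
\qquad
[\widehat{T^{(\ell)}_S}]_{k,2} = 1 - \sigma(u),
\end{align*}
where $\sigma$ denotes the logistic sigmoid. Plugging these weights into the MASO output formula~(\ref{eq:big12}) gives
\begin{align*}
[\bz^{(\ell)}]_k = \sigma(u)\cdot u \;+\; (1-\sigma(u))\cdot 0 \;=\; u\,\sigma(u),
\end{align*}
which is precisely the sigmoid gated linear unit (SiLU) of \cite{elfwing2018sigmoid} evaluated at the pre-activation $u$.

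There is no real obstacle here: the statement is essentially a one-line verification once the correct two-region MASO representation of ReLU is written down. The only point worth being careful about is bookkeeping of the ``inactive'' region (slope $\mathbf{0}$, bias $0$), since a different convention for the trivial region would change the partition function in the softmax and hence the resulting nonlinearity. To close, I would note that this same template (identify the HVQ-inducing parameters, then read off the SVQ output via~(\ref{eq:optimalEM})–(\ref{eq:big12})) is exactly what the remaining results in Appendix~\ref{proofproposition1} instantiate for max-pooling, absolute value, etc.
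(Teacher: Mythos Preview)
Your proposal is correct and follows essentially the same route as the paper: identify the two-region ReLU MASO parameters (one zero region, one affine region), plug them into the SVQ softmax~(\ref{eq:optimalEM}) so that the partition function becomes $1+e^{u}$, and read off $u\,\sigma(u)$ from~(\ref{eq:big12}). The only cosmetic difference is the labeling of the active vs.\ inactive region (the paper takes region~1 as the zero region and region~2 as the affine one), which of course does not affect the result.
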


\begin{prop}
The MASO parameters $A^{(\ell)},B^{(\ell)}$ that induce the max-pooling nonlinearity under HVQ induce {\em softmax-pooling} \cite{boureau2010theoretical} under SVQ.
\end{prop}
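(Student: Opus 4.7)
The plan is to directly carry out the HVQ $\to$ SVQ substitution in the spirit of the previous proposition (sigmoid-gated linear unit), using the known MASO parametrization of max-pooling.

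First, I will recall the MASO parameters that implement max-pooling under HVQ (these are given in Appendix~\ref{sec:appendix_DN_background} of the paper). Max-pooling of unit $k$ over a window $\mathcal{W}_k \subset \{1,\dots,D^{(\ell-1)}\}$ of size $R^{(\ell)}$ selects one of $R^{(\ell)}$ coordinates, so the slopes $[A^{(\ell)}]_{k,r,\bigcdot}$ are the canonical basis vectors $e_{\mathcal{W}_k(r)} \in \mathbb{R}^{D^{(\ell-1)}}$ indicating the $r^{\text{th}}$ position in the window, and the biases $[B^{(\ell)}]_{k,r}=0$. Substituting into \eqref{eq:big1} recovers $[\bz^{(\ell)}]_k=\max_{r}[\bz^{(\ell-1)}]_{\mathcal{W}_k(r)}$, confirming HVQ-max-pooling.

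Next, I apply the SVQ machinery. Plugging the same $A^{(\ell)}, B^{(\ell)}$ into the soft selection matrix \eqref{eq:optimalEM}, each inner product $\langle [A^{(\ell)}]_{k,r,\bigcdot}, \bz^{(\ell-1)}\rangle$ reduces to $[\bz^{(\ell-1)}]_{\mathcal{W}_k(r)}$, and since all slopes are canonical basis vectors, the $\frac{1}{2}\|[A^{(\ell)}]_{k,r,\bigcdot}\|^2$ terms would be the same constant $\tfrac12$ across $r$ and cancel in the ratio (and the biases are zero anyway). This yields
\[
[\widehat{T^{(\ell)}_S}]_{k,r} \;=\; \frac{\exp\bigl([\bz^{(\ell-1)}]_{\mathcal{W}_k(r)}\bigr)}{\sum_{r'}\exp\bigl([\bz^{(\ell-1)}]_{\mathcal{W}_k(r')}\bigr)}.
\]
Finally, I insert this SVQ selection matrix into the MASO output formula \eqref{eq:big12} (with the same $A^{(\ell)}, B^{(\ell)}$), obtaining
\[
[\bz^{(\ell)}]_k \;=\; \sum_{r=1}^{R^{(\ell)}} \frac{\exp\bigl([\bz^{(\ell-1)}]_{\mathcal{W}_k(r)}\bigr)}{\sum_{r'}\exp\bigl([\bz^{(\ell-1)}]_{\mathcal{W}_k(r')}\bigr)}\,[\bz^{(\ell-1)}]_{\mathcal{W}_k(r)},
\]
which is exactly the definition of softmax-pooling as in \cite{boureau2010theoretical}, completing the proof.

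There is no real obstacle here; the argument is essentially a one-line computation once the max-pooling MASO parameters are in hand. The only subtlety worth flagging is verifying that all the slope vectors have the same $\ell_2$ norm (they do, being distinct basis vectors), so that the exponential of $\tfrac12\|[A^{(\ell)}]_{k,r,\bigcdot}\|^2$ factors out of both numerator and denominator and does not perturb the softmax; otherwise the formula would acquire unwanted weights and not reduce cleanly to softmax-pooling.
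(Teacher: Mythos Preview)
Your proposal is correct and follows essentially the same approach as the paper, which simply instructs the reader to ``extend the following proof [for ReLU/SiGLU] for max-pooling'' without writing out the details. Your only superfluous step is the aside about the $\tfrac{1}{2}\|[A^{(\ell)}]_{k,r,\bigcdot}\|^2$ terms --- these do not appear in the SVQ formula \eqref{eq:optimalEM} at all (they are absorbed into the bias reparametrization of Theorem~\ref{thm2}), so no cancellation argument is needed.
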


%\subsection{Interpretability of Parameters and Partitioning}
%\label{sec:interpret2}

Appendix~\ref{sec:appendix_interpret2} discusses how the GMM and SVQ formulations shed new light on the impact of parameter initialization in DC learning plus how these formulations can be extended further.

\subsection{Additional Nonlinearities as Soft DN Layers}

Changing viewpoint slightly, we can also derive classical nonlinearities like the sigmoid, tanh, and softmax \cite{goodfellow2016deep} from the soft inference perspective.
Consider a new {\em soft DN layer} whose unit output $[\bz^{(\ell)}]_k$ is not the piecewise affine spline of (\ref{eq:big12}) but rather the probability $[\bz^{(\ell)}]_k=p([\bt^{(\ell)}]_k=1|\bz^{(\ell-1)})$ 
that the input $\bz^{(\ell)}$ falls into each VQ region.
The following propositions are proved in Appendix~\ref{proofproposition3}.

\begin{prop}
The MASO parameters $A^{(\ell)},B^{(\ell)}$ that induce the ReLU activation under HVQ induce the {\em sigmoid} activation in the corresponding soft DN layer.\footnote{The tanh activation is obtained similarly by reparametrizing $A^{(\ell)}$ and $B^{(\ell)}$; see Appendix \ref{proofproposition3}.}
\end{prop}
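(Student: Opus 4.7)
The plan is to explicitly invoke the MASO parameterization that realizes the ReLU under HVQ inference and then substitute it into the soft-layer output formula derived in Section~\ref{sec:softinf}. Recall from the MASO framework (Theorem~\ref{thm1} and Appendix~\ref{sec:appendix_DN_background}) that for unit $k$ of a ReLU layer we have $R^{(\ell)}=2$, with one ``active'' region corresponding to $[A^{(\ell)}]_{k,1,\bigcdot}=[W^{(\ell)}]_{k,\bigcdot}$ and $[B^{(\ell)}]_{k,1}=[b^{(\ell)}]_k$, and one ``dead'' region corresponding to $[A^{(\ell)}]_{k,2,\bigcdot}=\mathbf{0}$ and $[B^{(\ell)}]_{k,2}=0$. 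Indeed, plugging these into (\ref{eq:big1}) gives $\max(\langle [W^{(\ell)}]_{k,\bigcdot},\bz^{(\ell-1)}\rangle+[b^{(\ell)}]_k,\,0)$, which is precisely ReLU applied to the affine preactivation.

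Next, I would apply the definition of the soft DN layer: rather than returning the MASO output, unit $k$ returns $[\bz^{(\ell)}]_k=p([\bt^{(\ell)}]_k=1\mid\bz^{(\ell-1)})$, the posterior probability of the first (active) VQ region. By equation (\ref{eq:optimalEM}), this probability takes the softmax form
\begin{equation*}
[\bz^{(\ell)}]_k
=
\frac{\exp\!\left(\langle [A^{(\ell)}]_{k,1,\bigcdot},\bz^{(\ell-1)}\rangle+[B^{(\ell)}]_{k,1}\right)}
{\sum_{r=1}^{2}\exp\!\left(\langle [A^{(\ell)}]_{k,r,\bigcdot},\bz^{(\ell-1)}\rangle+[B^{(\ell)}]_{k,r}\right)}.
\end{equation*}
Substituting the ReLU parameters identified above collapses the denominator's $r=2$ term to $\exp(0)=1$, and the $r=1$ term simplifies to $\exp(\langle [W^{(\ell)}]_{k,\bigcdot},\bz^{(\ell-1)}\rangle+[b^{(\ell)}]_k)$. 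Dividing numerator and denominator by the numerator yields
\begin{equation*}
[\bz^{(\ell)}]_k
=
\frac{1}{1+\exp\!\left(-\langle [W^{(\ell)}]_{k,\bigcdot},\bz^{(\ell-1)}\rangle-[b^{(\ell)}]_k\right)}
=
\sigma\!\left(\langle [W^{(\ell)}]_{k,\bigcdot},\bz^{(\ell-1)}\rangle+[b^{(\ell)}]_k\right),
\end{equation*}
which is exactly the sigmoid activation on the same affine preactivation used by the ReLU, completing the argument.

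There is no real obstacle here; the only subtle point is making sure we use the \emph{same} MASO parameters $(A^{(\ell)},B^{(\ell)})$ in both the HVQ derivation (Theorem~\ref{thm2}) and the soft-layer formula, so that the claim is a genuine statement about one parameterization inducing two different nonlinearities under two different inference rules. For the footnote remark about $\tanh$, I would simply note that $\tanh(x)=2\sigma(2x)-1$, so reparametrizing by doubling $[A^{(\ell)}]_{k,1,\bigcdot}$ and $[B^{(\ell)}]_{k,1}$ and applying an affine post-processing $u\mapsto 2u-1$ to the soft-layer output recovers $\tanh$ from the same construction.
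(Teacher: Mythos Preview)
Your proof is correct and follows essentially the same approach as the paper: compute the posterior $p([\bt^{(\ell)}]_k=1\mid\bz^{(\ell-1)})$ for the ReLU's two-region MASO and observe that it collapses to a logistic sigmoid on the preactivation. The only cosmetic difference is that you invoke the already-established softmax form (\ref{eq:optimalEM}) directly, whereas the paper re-derives the posterior from scratch via Bayes' rule using the Gaussian likelihood and prior of Theorem~\ref{thm2}; your shortcut is cleaner and the $\tanh$ remark via $\tanh(x)=2\sigma(2x)-1$ matches the paper's reparametrization.
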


\iffalse
\begin{proof}
\begin{align*}
    p([t^{(\ell)}]_k=1|x)&=\frac{p(x|z=1)p(z=1)}{p(x)}=\frac{p(x|z=1)p(z=1)}{p(x|z=0)p(z=0)+p(x|z=1)p(z=1)}\\
    &=\frac{\exp(- \frac{\|x-[\bC^{(\ell)}]_{k,.} \|^2}{2})\frac{\exp(\frac{1}{2}\|[\bC^{(\ell)}]_{k,.} \|^2+[B^{(\ell)}]_{k,1})}{1+\exp(\frac{1}{2}\|[\bC^{(\ell)}]_{k,.} \|^2+[B^{(\ell)}]_{k,1})}}{\exp(- \frac{\|x \|^2}{2})\frac{1}{1+\exp(\frac{1}{2}\|[\bC^{(\ell)}]_{k,.} \|^2+[B^{(\ell)}]_{k,1})}+\exp(- \frac{\|x-[\bC^{(\ell)}]_{k,.} \|^2}{2})\frac{\exp(\frac{1}{2}\|[\bC^{(\ell)}]_{k,.} \|^2+[B^{(\ell)}]_{k,1})}{1+\exp(\frac{1}{2}\|[\bC^{(\ell)}]_{k,.} \|^2+[B^{(\ell)}]_{k,1})}}\\
    &=\frac{\exp(- \frac{\|x-[\bC^{(\ell)}]_{k,.} \|^2}{2})\exp(\frac{1}{2}\|[\bC^{(\ell)}]_{k,.} \|^2+[B^{(\ell)}]_{k,1})}{\exp(- \frac{\|x \|^2}{2})+\exp(- \frac{\|x-[\bC^{(\ell)}]_{k,.} \|^2}{2})\exp(\frac{1}{2}\|[\bC^{(\ell)}]_{k,.} \|^2+[B^{(\ell)}]_{k,1})}\\
    &=\frac{\exp(\langle x,[\bC^{(\ell)}]_{k,.}\rangle+[B^{(\ell)}]_{k,1})}{1+\exp(\langle x,[\bC^{(\ell)}]_{k,.}\rangle+[B^{(\ell)}]_{k,1})}\\
    &=\sigma_{sigmoid}([u^{(\ell)}]_k).
\end{align*}
While this is direct for sigmoid DNs, the use of hyperbolic tangent requires to reparametrize the current and following layer weights and biases to represent the shifting scaling as in $\bC^{(\ell)}:=2\bC^{(\ell)}$ and $\bC^{(\ell+1)}:=2\bC^{(\ell+1)},b_{\bC}^{(\ell+1)}:=b_{\bC}^{(\ell+1)}-1$ with $\bC$ replaced by $\bW$ for fully connected operators.
\end{proof}
\fi

A similar train of thought recovers the softmax nonlinearity typically used at the DN output for classification problems. 

\begin{prop}
The MASO parameters $A^{(\ell)},B^{(\ell)}$ that induce a fully-connected-pooling layer under HVQ (with output dimension $D^{(L)}$ equal to the number of classes $C$) induce the softmax nonlinearity in the corresponding soft DN layer.
\end{prop}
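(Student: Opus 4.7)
The plan is to reuse the sigmoid-from-ReLU template of the previous proposition, adjusting only so that a single MASO unit aggregates $R = C$ regions rather than $R = 2$. First I would write down the MASO representation of a fully-connected-pooling layer explicitly. If the fully-connected weights are $W \in \mathbb{R}^{C \times D^{(L-1)}}$ and biases $b \in \mathbb{R}^C$, then max-pooling over the $C$ affine outputs is, by Theorem \ref{thm1}, a single MASO unit ($K=1$) with $R = C$ regions, slopes $[A^{(L)}]_{1,r,\bigcdot} = W_{r,\bigcdot}$ and offsets $[B^{(L)}]_{1,r} = b_r$. Substituting these into (\ref{eq:big1}) under HVQ recovers $\max_{r=1,\dots,C}(\langle W_{r,\bigcdot}, \bz^{(L-1)}\rangle + b_r)$, confirming that these are the intended MASO parameters.

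Second, I would pass to the corresponding soft DN layer. In the sigmoid case there were $R = 2$ regions on each unit, and the soft layer emitted the scalar posterior $p([\bt^{(\ell)}]_k = 1 \mid \bz^{(\ell-1)})$; the natural generalization here, with a single unit having $R = C$ regions, is to emit the full posterior probability vector $[\bz^{(L)}]_r = p([\bt^{(L)}]_1 = r \mid \bz^{(L-1)})$ for $r = 1, \dots, C$, which delivers the required output dimension $D^{(L)} = C$. Specializing the soft-inference formula (\ref{eq:optimalEM}) to $K = 1$ with the MASO parameters above then yields
\begin{equation*}
[\bz^{(L)}]_r
=
\frac{\exp\!\big(\langle W_{r,\bigcdot}, \bz^{(L-1)}\rangle + b_r\big)}{\sum_{r'=1}^{C} \exp\!\big(\langle W_{r',\bigcdot}, \bz^{(L-1)}\rangle + b_{r'}\big)},
\end{equation*}
which is precisely the softmax of the pre-activation affine scores.

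The only real obstacle is bookkeeping. Equation (\ref{eq:optimalEM}) already absorbs the Gaussian quadratic $-\tfrac{1}{2}\|[A^{(\ell)}]_{k,r,\bigcdot}\|^2$ into the offset through the prior-to-bias identification of Theorem \ref{thm2} (namely $[B^{(\ell)}]_{k,r} = \log[\pi^{(\ell)}]_{k,r} - \tfrac{1}{2}\|[A^{(\ell)}]_{k,r,\bigcdot}\|^2$), so no residual norm-squared terms clutter the exponent. To close the argument I would simply verify that arbitrary softmax logits of the form $\langle W_{r,\bigcdot}, \bz^{(L-1)}\rangle + b_r$ are realized by choosing the underlying GMM prior as $[\pi^{(L)}]_{1,r} \propto \exp\!\big(b_r + \tfrac{1}{2}\|W_{r,\bigcdot}\|^2\big)$, establishing a one-to-one correspondence between the fully-connected-pooling MASO's soft DN layer and the standard softmax classification head.
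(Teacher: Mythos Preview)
Your proposal is correct and follows the same route as the paper: identify the fully-connected-pooling MASO as a single unit with $R=C$ regions, then read off the posterior $p([\bt^{(L)}]_1=r\mid\bz^{(L-1)})$ as the soft-layer output. The paper's appendix in fact only writes out the $R=2$ sigmoid case explicitly, redoing the full Bayes-rule computation from the Gaussian densities and priors, and leaves the $R=C$ softmax generalization implicit; your direct appeal to (\ref{eq:optimalEM}) is a legitimate shortcut that reaches the same conclusion with less algebra.
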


\iffalse
\begin{prop}
When using a softplus or a sigmoid linear unit (SiLU) the output correspond to a likelihood ratio with the following property
\begin{align}
    a:=& [\frac{p(x|[t^{(\ell)}]_{k})}{[p(x|t^{(\ell)}]_{k}=0)}
    \label{eq:a}
\end{align}
then we obtain
\begin{align}
    \underbrace{\log(E_{t|x}\left[ a \right])}_{\sigma_{softplus}([u^{(\ell)}]_k)}\geq \underbrace{E_{t|x}\left[\log( a) \right]}_{\sigma_{SiLU}([u^{(\ell)}]_k}.
\end{align}
\end{prop}

\begin{prop}
When using ReLU and when using the Eq.~\ref{eq:a} one obtains
\begin{align}
    \max_{[t^{(\ell)}]_k \in \{0,1\}}\log(p(x|t=1))=\sigma_{ReLU}([u^{(\ell)}]_k)
\end{align}
\end{prop}

Also one can see the following property

\begin{prop}
\begin{align}
  \sigma_{swish}([u],\beta) \geq \sigma_{ReLU}([u^{(\ell)}]_k) \geq \sigma_{softplus}([u],\beta),\forall [u],\beta \in (0,\infty)
\end{align}
and
\begin{align}
    &\lim_{\beta \rightarrow \infty} \sigma_{softplus}([u],\beta)=\sigma_{ReLU}([u^{(\ell)}]_k)\\
    &\lim_{\beta \rightarrow \infty} \sigma_{swish}([u],\beta)=\sigma_{ReLU}([u^{(\ell)}]_k)
\end{align}
\end{prop}
\fi

\section{Hybrid Hard/Soft Inference via Entropy Regularization}
\label{sec:hybrid}

%We now develop a new optimization problem that interpolates and extrapolates the VQ behavior between linear, soft, and hard. We now demonstrate how via a simple scaling of the Entropy regularization we can introduce an extra parameter $\beta$ controlling the behavior of the VQ. It results in the hard and soft VQ as special cases. We remind that  the Entropy regularization (recall Eq.~\ref{eq:entropyreg}) reinforces smoother latent representation. 

Combining (\ref{eq:tmasoeq}) and  (\ref{eq:optimalEM}) yields a hybrid optimization for a new {\em $\beta$-VQ} that recovers hard, soft, and linear VQ inference as special cases
\begin{align}
    [\widehat{T^{(\ell)}_{\beta}}]_{k}
    =
    \argmax_{t \in \bigtriangleup_{R^{(\ell)}_k}} \: [\beta^{(\ell)}]_k \sum_{r=1}^{R^{(\ell)}_k} \: [t]_r 
    \left(\left\langle [\bA^{(\ell)}]_{k,r,.},\bz^{(\ell-1)} \right\rangle+[\bB]_{k,r}\right)
    +
    \left(1-[\beta^{(\ell)}]_k\right) H(t).
    \label{eq:augmentedOPTI}
\end{align}
The new hyper-parameter $[\beta^{(\ell)}]_k \in (0,1)$.
The following is proved in Appendix~\ref{prooftheorem5}.

\begin{thm}
\label{thm4}
The unique global optimum of (\ref{eq:augmentedOPTI}) is given by
\begin{align}
    [\widehat{T_\beta^{(\ell)}}]_{k,r}
    =
    \frac{
    \exp\left(\frac{[\beta^{(\ell)}]_k}{1-[\beta^{(\ell)}]_k}
    \left(\left\langle [\bA^{(\ell)}]_{k,r},\bz^{(\ell-1)} \right\rangle
    +
    [\bB^{(\ell)}]_{k,r}\right)\right)
    }{
    \sum_{j=1}^{R^{(\ell)}} \exp\left(\frac{[\beta^{(\ell)}]_k}
    {1-[\beta^{(\ell)}]_k}
    \left(\left\langle [\bA^{(\ell)}]_{k,j,.},\bz^{(\ell-1)} \right\rangle+[\bB^{(\ell)}]_{k,j}\right)\right)}.\label{eq:betainference}
\end{align}
\end{thm}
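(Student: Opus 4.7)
The plan is to recognize (\ref{eq:augmentedOPTI}) as a strictly concave maximization of a linear-plus-entropy objective over the probability simplex $\bigtriangleup_{R^{(\ell)}_k}$, for which a closed-form Gibbs/softmax optimum is standard. For notational clarity, I would abbreviate $c_r := \langle [\bA^{(\ell)}]_{k,r,\bigcdot}, \bz^{(\ell-1)} \rangle + [\bB^{(\ell)}]_{k,r}$ and $\beta := [\beta^{(\ell)}]_k \in (0,1)$, so the problem becomes
\begin{equation*}
\max_{t \in \bigtriangleup_{R^{(\ell)}_k}} \; \beta \sum_{r} [t]_r\, c_r \;-\; (1-\beta)\sum_{r} [t]_r \log [t]_r,
\end{equation*}
with the conventions $0\log 0 := 0$.

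First I would establish existence and uniqueness of the maximizer before ever writing down KKT. The objective is continuous on the compact simplex, so a maximum exists. The linear term is concave (indeed affine) in $t$, and $-\sum_r [t]_r \log [t]_r$ is strictly concave on $\bigtriangleup_{R^{(\ell)}_k}$; since $1-\beta>0$, the full objective is strictly concave, so the maximizer is unique. Strict concavity also rules out maximizers on the relative boundary (where some $[t]_r = 0$), because the derivative of $-[t]_r \log [t]_r$ blows up as $[t]_r \downarrow 0$, so the non-negativity constraints are inactive at the optimum. This is the key structural fact that lets me drop the inequality multipliers and work only with the equality constraint $\sum_r [t]_r = 1$.

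Next I would form the Lagrangian $\mathcal{L}(t,\gamma) = \beta \sum_r [t]_r c_r - (1-\beta)\sum_r [t]_r \log [t]_r + \gamma\bigl(1 - \sum_r [t]_r\bigr)$ and set $\partial\mathcal{L}/\partial [t]_r = 0$ to obtain $\beta c_r - (1-\beta)(\log [t]_r + 1) - \gamma = 0$, giving
\begin{equation*}
[t]_r \;=\; \exp\!\left(\tfrac{\beta}{1-\beta} c_r \;-\; \tfrac{\gamma}{1-\beta} - 1\right).
\end{equation*}
Imposing $\sum_r [t]_r = 1$ then determines the constant $\exp(-\gamma/(1-\beta) - 1)$ as the reciprocal of $\sum_j \exp(\tfrac{\beta}{1-\beta} c_j)$, which after substitution yields exactly (\ref{eq:betainference}). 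Since this is the unique critical point of a strictly concave program on the relative interior of the simplex, it is the unique global maximum, completing the proof.

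The only subtlety (the closest thing to an obstacle) is the boundary argument: one must confirm that the $\log$ penalty indeed prevents boundary optima so that the simpler Lagrangian with only the equality constraint is justified; otherwise one would need to carry KKT multipliers for $[t]_r \ge 0$ and argue them to be zero. Everything else is a direct calculation, and the final form in (\ref{eq:betainference}) matches by inspection.
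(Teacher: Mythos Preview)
Your proposal is correct and follows essentially the same route as the paper: abbreviate $c_r$, form the Lagrangian with only the simplex equality constraint, solve the first-order conditions, and normalize to obtain the softmax with inverse temperature $\beta/(1-\beta)$. Your treatment is in fact slightly more careful than the paper's, which simply invokes KKT/Lagrange multipliers and then argues global optimality from ``only critical point'' without explicitly handling the boundary or the strict-concavity uniqueness argument you supply.
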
    

The $\beta$-VQ covers all of the theory developed above as special cases: $\beta=1$ yields HVQ, $\beta=\frac{1}{2}$ yields SVQ, and $\beta=0$ yields a linear MASO with $[\widehat{T_0^{(\ell)}}]_{k,r}=\frac{1}{R^{(\ell)}}$.
See Figure~\ref{fig:gmmnaso} for examples of how the $\beta$ parameter interacts with three example activation functions.
Note also the attractive property that (\ref{eq:betainference}) is differentiable with respect to $[\beta^{(\ell)}]_k$.

\iffalse
The above optimal vector $[t_\beta^{(\ell)}]_{k,r}$ is again of similar form as the previous section yet the control parameter $\beta$ will allow to structurally change the layer behavior. Let first denote in general the layer output based on this newly defined inference as the output of a GMM$\beta$-MASO:
\begin{align}
    [\bz^{(\ell)}(\bx)]_k&=\sum_{r=1}^{R^{(\ell)}_k}\frac{e^{\frac{\beta}{1-\beta}(\langle \bA^{(\ell)}[k,r],\bz^{(\ell-1)}(\bx) \rangle+\bB^{(\ell)}[k,r])}}{\sum_{j=1}^{R^{(\ell)}_k} e^{\frac{\beta}{1-\beta}(\langle \bA^{(\ell)}[k,j],\bz^{(\ell-1)}(\bx) \rangle+\bB^{(\ell)}[k,j]) }}\times (\langle \bA^{(\ell)}[k,r],\bz^{(\ell-1)}(\bx) \rangle+\bB^{(\ell)}[k,r]).
    \label{eq:betainference}
\end{align}
We now have the following interesting special cases when $\beta$ is $0$, $0.5$ and $1$.
\fi

\begin{table}[t]
\vspace{-0.8cm}
    \centering
    \begin{tabular}{l|l|l}
        VQ Type & Value for $[T^{(\ell)}]_k$ & Examples\\[1mm] \hline
         Hard VQ (HVQ) & $\argmax_{t \in \bigtriangleup_{R^{(\ell)}_k} } \mathcal{P}(t)$ & ReLU, max-pooling \\
         Soft VQ (SVQ) & $\argmax_{t \in \bigtriangleup_{R^{(\ell)}_k} } \mathcal{P}(t)+H(t)$&SiGLU, softmax-pooling \\
         $\beta$-VQ, $\beta \in [0,1]$&$\argmax_{t \in \bigtriangleup_{R^{(\ell)}_k} } \beta\mathcal{P}(t)+(1-\beta)H(t)$ & swish, $\beta$-softmax-pooling 
    \end{tabular}
    \caption{Impact of different VQ strategies for a MASO layer with $\mathcal{P}(t):=\sum_{r=1}^{R^{(\ell)}_k} [t]_r 
    \left(\left\langle [\bA^{(\ell)}]_{k,r,.},\bz^{(\ell-1)} \right\rangle+[\bB^{(\ell)}]_{k,r}\right)$.
    }
    \label{tab:summary_VQ}
\end{table}

\begin{figure}[!t]
\centering
    \includegraphics[width=0.9\linewidth]{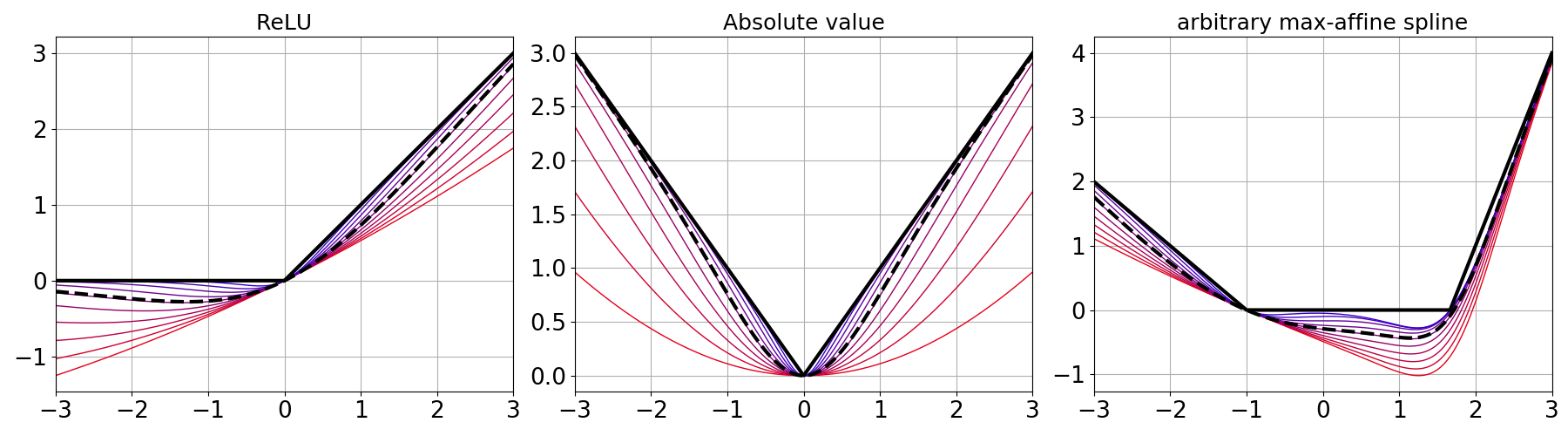} \\
    \caption{For the MASO parameters $A^{(\ell)},B^{(\ell)}$ for which HVQ yields the ReLU, absolute value, and an arbitrary convex activation function, we explore how changing $\beta$ in the $\beta$-VQ alters the induced activation function. 
    Solid black: HVQ ($\beta=1$), Dashed black: SVQ ($\beta=\frac{1}{2}$), Red: $\beta$-VQ ($\beta \in [0.1,0.9]$).
    Interestingly, note how some of the $\beta$-VQ are nonconvex.}
    \label{fig:gmmnaso}
\end{figure}

The $\beta$-VQ supports the development of new, high-performance DN nonlinearities.
For example, the {\em swish activation} $\sigma_{\rm swish}(u)=\sigma_{\rm sig}([\eta^{(\ell)}]_k u) u$ extends the sigmoid gated linear unit \cite{elfwing2018sigmoid} with the learnable parameter $[\eta^{(\ell)}]_k$ \cite{ramachandran2017searching}.
Numerous experimental studies have shown that DNs equipped with a learned swish activation significantly outperform those with more classical activations like ReLU and sigmoid.\footnote{Best performance was usually achieved with $[\eta^{(\ell)}]_k\in (0,1)$ \cite{ramachandran2017searching}.}
%In Appendix~\ref{xx} we show that a swish-based DN layer is a special case of a $\beta$-VQ.

\begin{prop}
The MASO $A^{(\ell)},B^{(\ell)}$ parameters that induce the ReLU nonlinearity under HVQ induce the {\em swish} nonlinearity under $\beta$-VQ, with $[\eta^{(\ell)}]_k=\frac{[\beta^{(\ell)}]_k}{1-[\beta^{(\ell)}]_k}$.
\end{prop}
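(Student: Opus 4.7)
The plan is to identify the MASO parameters that yield the ReLU under hard VQ, substitute them into the closed-form $\beta$-VQ selection matrix (\ref{eq:betainference}), and evaluate the layer output via (\ref{eq:big12}), expecting it to collapse to the swish form $u\,\sigma([\eta^{(\ell)}]_k u)$.

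First I would write down the ReLU-inducing parameters explicitly. Letting $u := \langle [A^{(\ell)}]_{k,1,\bigcdot}, \bz^{(\ell-1)} \rangle$, the ReLU corresponds to a two-region scalar MASO ($R^{(\ell)}=2$) with $[A^{(\ell)}]_{k,2,\bigcdot} = \mathbf{0}$ and $[B^{(\ell)}]_{k,1} = [B^{(\ell)}]_{k,2} = 0$, since $\max(u,0)$ is precisely the MASO (\ref{eq:big1}) under these slopes and offsets. This is the same parameter assignment that yields the sigmoid-gated linear unit at $\beta = \tfrac{1}{2}$ in the earlier proposition, so it serves as the canonical ReLU-inducing MASO.

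Second, I would substitute these values into (\ref{eq:betainference}). Setting $[\eta^{(\ell)}]_k := [\beta^{(\ell)}]_k/(1-[\beta^{(\ell)}]_k)$, the two logits inside the softmax collapse to $[\eta^{(\ell)}]_k u$ and $0$. The normalizer is therefore $\exp([\eta^{(\ell)}]_k u) + 1$, so the soft-selection entries become $[\widehat{T^{(\ell)}_\beta}]_{k,1} = \sigma([\eta^{(\ell)}]_k u)$ and $[\widehat{T^{(\ell)}_\beta}]_{k,2} = 1 - \sigma([\eta^{(\ell)}]_k u)$, where $\sigma$ denotes the logistic sigmoid. Plugging these into the MASO output formula (\ref{eq:big12}), only the first region contributes non-trivially, so
\[
[\bz^{(\ell)}]_k \;=\; \sigma([\eta^{(\ell)}]_k u)\,u \;+\; (1-\sigma([\eta^{(\ell)}]_k u))\cdot 0 \;=\; u\,\sigma([\eta^{(\ell)}]_k u),
\]
which is exactly $\sigma_{\mathrm{swish}}(u)$ with the claimed parameter mapping.

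The argument is a direct substitute-and-simplify, so there is no essential obstacle. The only subtlety worth double-checking is consistency of the ReLU parameter identification across the HVQ ($\beta \to 1$) and SVQ ($\beta = \tfrac{1}{2}$) propositions: both limits should recover the ReLU and the sigmoid-gated linear unit respectively, which serves as a sanity check on the reparametrization $[\eta^{(\ell)}]_k = [\beta^{(\ell)}]_k/(1-[\beta^{(\ell)}]_k)$ and confirms that the dead/zero branch of ReLU is what produces the gating structure of swish in the soft regime.
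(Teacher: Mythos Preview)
Your proposal is correct and is essentially the same substitute-and-simplify argument the paper uses for the $\beta=\tfrac{1}{2}$ special case (Proposition~3, Appendix~\ref{proofproposition1}); the paper does not give a separate proof for the swish proposition, treating it as the obvious generalization once the inverse temperature $[\eta^{(\ell)}]_k=[\beta^{(\ell)}]_k/(1-[\beta^{(\ell)}]_k)$ is inserted into (\ref{eq:betainference}). The only cosmetic discrepancy is that the paper indexes the zero branch of ReLU as $r=1$ and the active branch as $r=2$, whereas you do the reverse, which of course does not affect the computation.
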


%In , it was found to offer state-of-the-art performance by learning the parameter $[\eta^{(\ell)}]_k$ as an unconstrained scalar. However, it has been shown to converge to positive values mostly ranging from $0$ to $1$. This finds a natural interpretation with our formulation. 
%The positivity in our case is natural as $\frac{[\beta^{(\ell)}]_k}{1-[\beta^{(\ell)}]_k} \geq 0$. The range of $[0,1]$ means that the learning tends toward linear layer behavior as opposed to closer to the hard-VQ (when $[\eta^{(\ell)}]_k$ grows large). Furthermore, most values are concentrated around $1$ which corresponds to the soft-VQ (case with $\beta=0.5$).

Table~\ref{tab:summary_VQ} summarizes some of the many nonlinearities that are within reach of the $\beta$-VQ.

\section{Optimal Joint VQ Inference via Orthogonalization}
\label{sec:orthogonal_filters}

%\green{HERE: we show orthogonal is necessary condition to have independent units. This leads to 1) the per unit VQ to be jointly optimal 2) uncorrelated units which is beneficial for DNs}

The GMM (\ref{eq:GMM}) models the impact of only a single layer unit on the layer-$\ell$ input $\bz^{(\ell-1)}$.
We can easily extend this model to a {\em factorial model} for $\bz^{(\ell-1)}$ that enables all $D^{(\ell)}$ units at layer $\ell$ to combine their syntheses:
\begin{align}
    \bz^{(\ell-1)} = \sum_{k=1}^{D^{(\ell)}}\sum_{r=1}^{R^{(\ell)}} \Indic\left([\bt^{(\ell)}]_k=r\right) [A^{(\ell)}]_{k,r,\bigcdot}+\epsilon, 
    \label{eq:fGMM}
\end{align}
with $\epsilon \sim \mathcal{N}(0,I\sigma^2)$. 
This new model is a mixture of $R^{(\ell)}$ Gaussians with means $[A^{(\ell)}]_{k,r,\bigcdot} \in \mathbb{R}^{D^{(\ell-1)}}$ and identical isotropic covariances with variance $\sigma^2$.
The factorial aspect of the model means that the number of possible combinations of the $\bt^{(\ell)}$ values grow exponentially with the number of units.
Hence, inferring the latent variables $\bt^{(\ell)}$ quickly becomes intractable. 

However, we can break this combinatorial barrier and achieve efficient, tractable VQ inference by constraining the MASO slope parameters $A^{(\ell)}$ to be orthogonal 
\begin{equation}
    \left\langle [A^{(\ell)}]_{k,r,.},[A^{(\ell)}]_{k',r',.} \right\rangle =0 
    \quad\forall k\not = k'~ \forall r,r'.
    \label{eq:orthog}
\end{equation}
Orthogonality is achieved in a fully connected layer (multiplication by the dense matrix $\bW^{(\ell)}$ composed with activation or pooling) when the rows of $\bW^{(\ell)}$ are orthogonal.
Orthogonality is achieved in a convolution layer (multiplication by the convolution matrix $\bC^{(\ell)}$ composed with activation or pooling) when the rows of $\bC^{(\ell)}$ are either non-overlapping or properly apodized; see Appendix~\ref{prooftheorem6} for the details plus the proof of the following result.

\begin{thm}
\label{thm6}
If the slope parameters $A^{(\ell)}$ of a MASO are orthogonal in the sense of (\ref{eq:orthog}), then the random variables $[\bt^{(\ell)}]_1|\bz^{(\ell-1)},\dots,[\bt^{(\ell)}]_1|\bz^{(\ell-1)}$ of the model (\ref{eq:fGMM}) are independent and hence $p\left([\bt^{(\ell)}]_1,\dots,[\bt^{(\ell)}]_{D^{(\ell)}}|\bz^{(\ell-1)}\right)
=
\prod_{k=1}^{D^{(\ell)}} p\left([\bt^{(\ell)}]_k|\bz^{(\ell-1)}\right)
$.
\end{thm}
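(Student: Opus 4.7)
The plan is to apply Bayes' rule directly to the factorial GMM (\ref{eq:fGMM}) and show that under orthogonality the log-posterior $\log p(\bt^{(\ell)} \mid \bz^{(\ell-1)})$ decomposes into a sum of terms, each depending on only a single coordinate $[\bt^{(\ell)}]_k$. Since the prior over the latent variables in the factorial GMM is already a product across units, the only thing that can entangle the coordinates is the likelihood $p(\bz^{(\ell-1)} \mid \bt^{(\ell)})$, which is Gaussian with mean $\mu(\bt^{(\ell)}) = \sum_{k} [A^{(\ell)}]_{k,[\bt^{(\ell)}]_k,\bigcdot}$ and covariance $\sigma^2 I$. So the whole proof reduces to checking that the log-likelihood has no cross terms in $k$.

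First I would expand the squared Mahalanobis norm $\bigl\|\bz^{(\ell-1)} - \mu(\bt^{(\ell)})\bigr\|^2$ into three pieces: the data term $\|\bz^{(\ell-1)}\|^2$, which is independent of $\bt^{(\ell)}$ and absorbed into the normalizing constant; the linear term $-2\sum_k \langle \bz^{(\ell-1)}, [A^{(\ell)}]_{k,[\bt^{(\ell)}]_k,\bigcdot}\rangle$, which is already a sum of per-unit contributions; and the problematic quadratic term $\|\mu(\bt^{(\ell)})\|^2$. Expanding the quadratic term gives $\sum_k \|[A^{(\ell)}]_{k,[\bt^{(\ell)}]_k,\bigcdot}\|^2 + \sum_{k \ne k'} \langle [A^{(\ell)}]_{k,[\bt^{(\ell)}]_k,\bigcdot}, [A^{(\ell)}]_{k',[\bt^{(\ell)}]_{k'},\bigcdot}\rangle$. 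The orthogonality hypothesis (\ref{eq:orthog}) kills every cross term for any assignment of $[\bt^{(\ell)}]_k, [\bt^{(\ell)}]_{k'}$, leaving only the per-unit squared norms.

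Putting the pieces together, the conditional density becomes
\begin{equation*}
p\bigl(\bt^{(\ell)} \mid \bz^{(\ell-1)}\bigr) \;\propto\; \prod_{k=1}^{D^{(\ell)}} \Bigl[\, [\pi^{(\ell)}]_{k,[\bt^{(\ell)}]_k} \exp\!\Bigl(\tfrac{1}{\sigma^2}\langle \bz^{(\ell-1)}, [A^{(\ell)}]_{k,[\bt^{(\ell)}]_k,\bigcdot}\rangle - \tfrac{1}{2\sigma^2}\|[A^{(\ell)}]_{k,[\bt^{(\ell)}]_k,\bigcdot}\|^2\Bigr)\Bigr],
\end{equation*}
i.e.\ a product of factors each involving only one $[\bt^{(\ell)}]_k$. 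Because the joint factorizes into a product of per-coordinate functions (up to a global constant in $\bt^{(\ell)}$), the normalizing constant also splits as a product, so each factor can be identified with $p\bigl([\bt^{(\ell)}]_k \mid \bz^{(\ell-1)}\bigr)$ (which one recognizes as the single-unit SVQ posterior from (\ref{eq:optimalEM})), giving the claimed independence.

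The only mildly delicate step is the normalization: one has to argue that once the unnormalized posterior factorizes as $\prod_k f_k(t_k, \bz^{(\ell-1)})$, the partition function $\sum_{\bt^{(\ell)}} \prod_k f_k$ also factorizes as $\prod_k \sum_{t_k} f_k$, so that each individual marginal $p([\bt^{(\ell)}]_k \mid \bz^{(\ell-1)})$ is obtained by the corresponding per-factor normalization. This is a standard fact about finite product measures, but it is the only step that is not simply algebraic manipulation and is worth mentioning explicitly. All other steps are book-keeping on the quadratic form, and the real content of the theorem lies entirely in the cancellation of cross terms granted by (\ref{eq:orthog}).
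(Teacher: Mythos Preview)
Your proposal is correct and follows essentially the same approach as the paper: the paper's proof simply records the expansion $\|\bz^{(\ell-1)}-\sum_k [A^{(\ell)}]_{k,[r_k],\bigcdot}\|^2 = \|\bz^{(\ell-1)}\|^2 - 2\sum_k \langle \bz^{(\ell-1)}, [A^{(\ell)}]_{k,[r_k],\bigcdot}\rangle + \sum_k \|[A^{(\ell)}]_{k,[r_k],\bigcdot}\|^2$ under (\ref{eq:orthog}) and then asserts that the joint problem decouples into the per-unit problems of Theorem~\ref{thm2}. Your write-up is in fact more complete, since you spell out the factorization of the normalizing constant that the paper leaves implicit.
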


Per-unit orthogonality brings the benefit of ``uncorrelated unit firing,'' which has been shown to provide many practical advantages in DNs \cite{comp}. 
Orthogonality also renders the joint MAP inference of the factorial model's VQs tractable. The following result is proved in Appendix~\ref{prooftheorem6}.

\begin{cor}
\label{cor:fGMM_MAP}
When the conditions of Theorem~\ref{thm6} are fulfilled, the joint MAP estimate for the VQs of the factorial model (\ref{eq:fGMM})
\begin{align}
   \widehat{\bt^{(\ell)}_{f}} 
   = 
   \argmax_{t\in \{1,\dots,R^{(\ell)}\}\times \dots \times \{1,\dots,R^{(\ell)}\}} p\left(t|\bz^{(\ell-1)}\right)
   = 
   \left[[\widehat{\bt^{(\ell)}}]_1,\dots,
   [\widehat{\bt^{(\ell)}}]_{D^{(\ell)}} \right]^\intercal
   \label{eq:MAP_fGMM}
\end{align}
and thus can be computed with linear complexity in the number of units.
\end{cor}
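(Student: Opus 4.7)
The plan is to leverage Theorem~\ref{thm6} directly: once the joint posterior factorizes across units, the joint maximization separates into $D^{(\ell)}$ independent per-unit maximizations, each of which is exactly the single-unit MAP from Theorem~\ref{thm2}.

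First I would apply Theorem~\ref{thm6} to rewrite the objective of (\ref{eq:MAP_fGMM}). Under the orthogonality hypothesis (\ref{eq:orthog}), we have
\begin{equation*}
p\left(t_1,\dots,t_{D^{(\ell)}}\,\big|\,\bz^{(\ell-1)}\right)
=
\prod_{k=1}^{D^{(\ell)}} p\left([\bt^{(\ell)}]_k=t_k\,\big|\,\bz^{(\ell-1)}\right),
\end{equation*}
so the joint argmax over the product set $\{1,\dots,R^{(\ell)}\}^{D^{(\ell)}}$ is the argmax of a product of non-negative factors in which the $k$-th factor depends only on the $k$-th coordinate $t_k$.

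The next step is a standard separation argument: for any collection of non-negative functions $f_k:\{1,\dots,R^{(\ell)}\}\to \R_{\ge 0}$, one has
\begin{equation*}
\argmax_{(t_1,\dots,t_{D^{(\ell)}})} \prod_{k=1}^{D^{(\ell)}} f_k(t_k)
=
\left(\argmax_{t_1} f_1(t_1),\,\dots,\,\argmax_{t_{D^{(\ell)}}} f_{D^{(\ell)}}(t_{D^{(\ell)}})\right),
\end{equation*}
which follows either by taking logarithms (since each posterior marginal is strictly positive under the Gaussian likelihood) and separating the sum, or by a direct coordinatewise comparison argument. Applying this with $f_k(t_k)=p([\bt^{(\ell)}]_k=t_k|\bz^{(\ell-1)})$ produces precisely the stack $[[\widehat{\bt^{(\ell)}}]_1,\dots,[\widehat{\bt^{(\ell)}}]_{D^{(\ell)}}]^\intercal$, where each $[\widehat{\bt^{(\ell)}}]_k$ is the per-unit MAP estimator already characterized by Theorem~\ref{thm2}.

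Finally, the complexity claim is a direct consequence of this decomposition: each per-unit MAP computes an $\argmax$ over $R^{(\ell)}$ affine scores (cf.\ (\ref{eq:tmasoeq})), which takes $O(R^{(\ell)} D^{(\ell-1)})$ work, and this is repeated independently for the $D^{(\ell)}$ units, for a total cost linear in $D^{(\ell)}$ rather than exponential. The only step requiring care is the separation argument, since in general an $\argmax$ over a product space need not decouple; here it does because Theorem~\ref{thm6} grants the product structure of the posterior, and the Gaussian density is strictly positive so no factor vanishes and the coordinatewise optimizer is unambiguously a joint optimizer. That verification, essentially bookkeeping once Theorem~\ref{thm6} is in hand, is the one place I expect to need to slow down in the formal write-up.
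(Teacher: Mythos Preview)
Your proposal is correct and follows essentially the same approach as the paper. The only organizational difference is that the paper's appendix expands the factorial Gaussian log-likelihood directly, using orthogonality to drop the cross terms so that $\|\bz^{(\ell-1)}-\sum_k [A^{(\ell)}]_{k,[r]_k,\bigcdot}\|^2$ separates additively over $k$, and then invokes the single-unit argument of Theorem~\ref{thm2}; you instead cite the posterior factorization of Theorem~\ref{thm6} and separate the product, which is the same mechanism viewed one step downstream.
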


%The above highlights that the intractable MAP optimization from (\ref{eq:MAP_fGMM}) becomes tractable with orthogonal weights as it can be solved independently per units and thus falls back to the VQ vector as the MAP from the GMM-MASO (recall Thm.~\ref{thm2}).
%For non orthogonal filters, per unit VQ inference is sub-optimal as $p([\widehat{\bt^{(\ell)}}]_1,\dots,[\widehat{\bt^{(\ell)}}]_{D^{(\ell)}}|\bz^{(\ell-1)})<p(\widehat{\bt^{(\ell)}_{f}}|\bz^{(\ell-1)})$. The amount of correlation between the filters defines how sub-optimal is the greedy per unit MAP as opposed to the true one.

The advantages of orthogonal or near-orthogonal filters have been explored empirically in various settings, from GANs \cite{brock2016neural} to RNNs \cite{huang2017orthogonal}, typically demonstrating improved performance.
Table~\ref{tab:classification} tabulates the results of a simple confirmation experiment with the {\sl largeCNN} architecture described in Appendix~\ref{ap:dntopology}.
We added to the standard cross-entropy loss a term $\lambda \sum_{k}\sum_{k'\not = k}\sum_{r,r'}\langle [A^{(\ell)}]_{k,r,\bigcdot},[A^{(\ell)}]_{k',r',\bigcdot} \rangle^2$ that penalizes non-orthogonality (recall (\ref{eq:orthog})). 
We did not cross-validate the penalty coefficient $\lambda$ but instead set it equal to 1. 
The tabulated results show clearly that favoring orthogonal filters improves accuracy across both different datasets and different learning settings.

Since the orthogonality penalty does not guarantee true orthogonality but simply favors it, we performed one additional experiment where we reparametrized the fully-connected and convolution matrices using the Gram-Schmidt (GS) process \cite{daniel1976reorthogonalization} so that they were truly orthogonal.
%
%This will allow to take an arbitrary collection of $K^{(\ell)}$ linearly independent filters $\{w_k,\forall k\}$ and produce their orthogonalised versions as $\{w'_k,\forall k\}$.
%
Thanks to the differentiability of all of the operations involved in the GS process, we can
backpropagate the loss to the orthogonalized filters in order to update them in learning.
We also used the swish activation, which we showed to be a $\beta$-VQ nonlinearity in Section~\ref{sec:hybrid}.
Since the GS process adds significant computational overhead to the learning algorithm, we conducted only one experiment on the largest dataset (CIFAR100).
The exactly orthogonalized {\sl largeCNN} achieved a classification accuracy of {\bf $61.2\%$}, which is a major improvement over all of the results in the bottom (CIFAR100) cell of Table~\ref{tab:classification}.
This indicates that there are good reasons to try to improve on the simple orthogonality-penalty-based approach.

\begin{table}[t!]
    \centering
    \begin{tabular}{l|c|c|c} 
            Setting            &  ${\sl LR}= 0.001$ &  ${\sl LR}=0.0005$ &  ${\sl LR}=0.0001$  \\ \hline
         SVHN  (baseline)         & 94.3 $\pm$ 0.1 & 94.4 $\pm$ 0.1 & 93.4 $\pm$ 0.0   \\
         SVHN Ortho     &       94.6 $\pm$ 0.2       & 95.0 $\pm$ 0.2 &  93.8 $\pm$ 0.1          \\
         %SVHN VQ        &              & 95.2 $\pm$ 0.0 & 94.7 $\pm$ 0.1   \\ 
         \hline
         CIFAR10 (baseline)       & 80.3 $\pm$ 0.4 & 80.2 $\pm$ 0.2 & 76.2 $\pm$ 0.3   \\
         CIFAR10 Ortho  & 84.0 $\pm$ 0.3 & 82.3 $\pm$ 0.1 & 79.1 $\pm$ 0.2   \\
         %CIFAR10 VQ     & 83.7 $\pm$ 0.2 & 83.4 $\pm$ 0.2 & 81.4 $\pm$ 0.4   \\ 
         \hline
         CIFAR100 (baseline)      & 43.6 $\pm$ 0.2 & 44.1 $\pm$ 0.4 & 37.5 $\pm$ 0.5   \\
         CIFAR100 Ortho & 46.1 $\pm$ 0.2 & 46.3 $\pm$ 0.2 & 42.1 $\pm$ 0.3   \\
         %CIFAR100 VQ    & 50.3 $\pm$ 0.2 & 46.6 $\pm$ 0.1 & 44.8 $\pm$ 0.2   
    \end{tabular}
    \caption{Classification experiment to demonstrate the utility of orthogonal DN layers.
    %(Section~\ref{sec:orthogonal_filters}) and vector quantizer linear classification (VQLC) DNs (Section~\ref{sec:classify_VQ}).
    For three datasets and the same {\em largeCNN} architecture (detailed in Appendix~\ref{ap:dntopology}), we tabulate the classification accuracy (larger is better) and its standard deviation averaged over $5$ runs with different Adam learning rates. 
    In each case, orthogonal fully-connected and convolution matrices improve the classification accuracy over the baseline.
    %The VQLC approach (without orthogonalization) improves even further.
    }
    \label{tab:classification}
\end{table}

\section{Future Work}
\label{sec:conc}

Our development of the SMASO model opens the door to several new research questions. 
First, we have merely scratched the surface in the exploration of new nonlinear activation functions and pooling operators based on the SVQ and $\beta$-VQ.
For example, the soft- or $\beta$-VQ versions of leaky-ReLU, absolute value, and other piecewise affine and convex nonlinearities could outperform the new swish nonlinearity.
Second, replacing the entropy penalty in the (\ref{eq:softEnt}) and (\ref{eq:augmentedOPTI}) with a different penalty will create entirely new classes of nonlinearities that inherit the rich analytical properties of MASO DNs.
Third, orthogonal DN filters will enable new analysis techniques and DN probing methods, since from a signal processing point of view problems such as denoising, reconstruction, compression have been extensively studied in terms of orthogonal filters.
Fourth, the Gram-Schmidt exact orthogonalization routine for orthogonal filters is quite intense for very deep and wide DNs.
We plan to explore methods based on recursion and parallelism to speeding up the computations.

%The use of the VQ to linearly classify the images should also allow much more available working DN as the whole seaquence of VQs can be used to classify making poor topology recoverable via the first part of they VQ.

\medskip

\small
\bibliography{iclr2019_conference}
\bibliographystyle{iclr2019_conference}
\newpage
\appendix

\centerline{\Large\sc Supplementary Materials}

\section{Background}
\label{sec:appendix_DN_background}
A Deep Network (DN) is an operator $f_\Theta: \R^D \rightarrow \R^C$ that maps an input signal
$\bx\in\R^D$ to an output prediction 
$y \in \R^C$.
All current DNs can be written as a composition of $L$ intermediate mappings called {\em layers}
\begin{equation}
%\widehat{y}=
f_\Theta(\bx)= \left(f^{(L)}_{\theta^{(L)}} \circ \dots \circ f^{(1)}_{\theta^{(1)}}\right)\!(\bx),
\label{eq:layers1}
\end{equation}
where $\Theta=\left\{\theta^{(1)},\dots,\theta^{(L)}\right\}$ is the 
collection of the network's parameters from each layer.
The DN layer at level $\ell$ is an operator $f^{(\ell)}_{\theta^{(\ell)}}$ that takes as input the vector-valued signal $\bz^{(\ell-1)}(\bx)\in \R^{D^{(\ell-1)}}$ and produces the vector-valued output $\bz^{(\ell)}(\bx)\in \R^{D^{(\ell)}}$ with $D^{(L)}=C$. 
The signals $\bz^{(\ell)}(\bx),\ell > 1$ are typically called {\em feature maps} an the input is denoted as $\bz^{(0)}(\bx)=\bx$. 
For concreteness, we will focus here on processing multi-channel images $x$ but adjusting the appropriate dimensionalities can be used to adapt our results.
We will use two equivalent representations for the signal and feature maps, one based on tensors and one based on flattened vectors.
In the {\em tensor} representation, $z^{(\ell)}$ contains $C^{(\ell)}$ channels of size $\left(I^{(\ell)} \times J^{(\ell)}\right)$ pixels.
In the {\em vector} representation, 
$[\bz^{(\ell)}(\bx)]_k$ represents the entry of the $k^{\rm th}$ dimension of the flattened, vector version $\bz^{(\ell)}(\bx)$ of $z^{(\ell)}(\bx)$.
Hence, $D^{(\ell)}=C^{(\ell)}I^{(\ell)}J^{(\ell)}$, 
$C^{(L)}=C$, $I^{(L)}=1$, and $J^{(L)}=1$. For conciseness we will often denote $\bz^{(\ell)}(\bx)$ as $\bz^{(\ell)}$.
When using nonlinearities and pooling which are piecewise affine and convex, the layers and whole DN fall under the analysis of max-affine spline operators (MASOs) developed in \cite{reportRB}. In this framework, a {\em max-affine spline operator} with parameters $A^{(\ell)}\in \mathbb{R}^{D^{(\ell)}\times R \times D^{(\ell-1)}}$ and $B^{(\ell)}\in \mathbb{R}^{D^{(\ell)}\times R}$ is defined as 
\begin{align}
     \bz^{(\ell)}=S\!\left[A^{(\ell)},B^{(\ell)}\right]\!(\bz^{(\ell-1)})
    = \left[ 
    \begin{matrix}
    \max_{r=1,\dots,R}\langle [A^{(\ell)}]_{1,r,.}, \bz^{(\ell-1)}\rangle+[B^{(\ell)}]_{1,r}\\
    \vdots \\
    \max_{r=1,\dots,R}\langle [A^{(\ell)}]_{K,r,.}, \bz^{(\ell-1)}\rangle+[B^{(\ell)}]_{K,r}
    \end{matrix}
    \right].
        \label{eq:MASO}
\end{align}
Any DN layer made of convex and piecewise affine nonlinearities or pooling can be rewritten exactly as a MASO. Hence, such operators take place of the layer mappings of (\ref{eq:layers1})
We first proceed by modifying (\ref{eq:MASO}) to highlight the internal inference problem. 
We first introduce the \textit{VQ}-matrix $T^{(\ell)} \in \mathbb{R}^{D^{(\ell)}\times R}$ which will be used to make the mapping region specific, as in
\begin{equation}
    A^{(\ell)}[T^{(\ell)}]=\left [
    \begin{matrix}
    (\sum_{r=1}^R [T^{(\ell)}]_{1,r} [\bA^{(\ell)}]_{1,r,.})^T\\
    \vdots \\
    (\sum_{r=1}^R [T^{(\ell)}]_{K,r} [\bA^{(\ell)}]_{K,r,.})^T\\
    \end{matrix}\right],
    B^{(\ell)}[T^{(\ell)}]=\left [
    \begin{matrix}
    (\sum_{r=1}^R [T^{(\ell)}]_{1,r} [\bB^{(\ell)}]_{1,r,.})^T\\
    \vdots \\
    (\sum_{r=1}^R [T^{(\ell)}]_{K,r} [\bB^{(\ell)}]_{K,r,.})^T\\
    \end{matrix}\right],
\end{equation}
effectively making $A^{(\ell)}[T^{(\ell)}]$ a matrix of shape $(D^{(\ell)},D^{(\ell-1)})$ and $B^{(\ell)}[T^{(\ell)}]$ a vector of length $D^{(\ell)}$. Hence the VQ-matrix is used to combined the per region parameters. In a standard MASO, each row of $T^{(\ell)}$ is a one-hot vector at position corresponding to the region in which the input falls into. Due to the one-hot encoding present in $T^{(\ell)}$ we refer to this inference as a hard-VQ.
\begin{prop}
\label{prop:MASOT}
For a MASO, the VQ-matrix is denoted as $T_H^{(\ell)}$ and  is obtained via the internal maximization process of (\ref{eq:MASO}). It corresponds to the (hard-)VQ of the input. Once computed the output is a simple affine transform of the input as
\begin{align}
    \bz^{(\ell)}=A^{(\ell)}[T^{(\ell)}_H]\bz^{(\ell-1)}+B^{(\ell)}[T^{(\ell)}_H].
\end{align}
with  $[T^{(\ell)}_H]_{k,r}=\Indic_{\{r=\argmax_{r=1,\dots,R}  \langle [\bA^{(\ell)}]_{k,r,.},\bz^{(\ell-1)}\rangle +[\bB^{(\ell)}]_{k,r}\}}$.
\end{prop}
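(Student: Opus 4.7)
}

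The plan is to prove the claim by direct rewriting of the MASO definition (\ref{eq:MASO}), one output coordinate at a time, and then reassembling everything in matrix form using the block constructions of $A^{(\ell)}[T^{(\ell)}]$ and $B^{(\ell)}[T^{(\ell)}]$ given just above the statement.

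First I would fix an arbitrary output index $k\in\{1,\dots,D^{(\ell)}\}$ and consider the $k$-th component of the MASO output, namely
\begin{align*}
[\bz^{(\ell)}]_k = \max_{r=1,\dots,R}\, c_{k,r},\qquad c_{k,r}:=\bigl\langle [A^{(\ell)}]_{k,r,.},\,\bz^{(\ell-1)}\bigr\rangle + [B^{(\ell)}]_{k,r}.
\end{align*}
The elementary fact I would invoke is that for any finite collection of real numbers $c_{k,1},\dots,c_{k,R}$, if $r^*(k)$ denotes an index achieving the maximum (breaking ties by, say, taking the smallest such index), then $\max_r c_{k,r} = \sum_{r=1}^R \Indic_{\{r=r^*(k)\}}\, c_{k,r}$. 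Setting $[T_H^{(\ell)}]_{k,r} := \Indic_{\{r=r^*(k)\}}$ recovers exactly the one-hot row structure claimed in the proposition, and also matches the $\argmax$-based definition given in the statement.

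Next I would substitute this rewrite into the expression for $[\bz^{(\ell)}]_k$ and split the inner product from the bias:
\begin{align*}
[\bz^{(\ell)}]_k = \Bigl\langle \sum_{r=1}^R [T_H^{(\ell)}]_{k,r}\,[A^{(\ell)}]_{k,r,.},\ \bz^{(\ell-1)}\Bigr\rangle + \sum_{r=1}^R [T_H^{(\ell)}]_{k,r}\,[B^{(\ell)}]_{k,r}.
\end{align*}
By the definition of $A^{(\ell)}[T^{(\ell)}]$ and $B^{(\ell)}[T^{(\ell)}]$ immediately preceding the proposition, the bracketed vector is precisely the $k$-th row of $A^{(\ell)}[T_H^{(\ell)}]$ and the trailing sum is the $k$-th entry of $B^{(\ell)}[T_H^{(\ell)}]$. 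Stacking these identities over all $k=1,\dots,D^{(\ell)}$ yields the claimed affine form $\bz^{(\ell)} = A^{(\ell)}[T_H^{(\ell)}]\bz^{(\ell-1)} + B^{(\ell)}[T_H^{(\ell)}]$.

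The only subtlety I foresee is the non-uniqueness of the argmax on the measure-zero subset of inputs where two or more affine pieces tie; I would address this by noting that any consistent tie-breaking rule produces a valid one-hot $T_H^{(\ell)}$, and that the output $[\bz^{(\ell)}]_k$ takes the common tied value regardless of which $r^*(k)$ is selected, so the affine representation is well-defined up to this arbitrary choice. Apart from this, the argument is pure bookkeeping and requires no deeper ingredient than the scalar identity $\max_r c_r = \sum_r \Indic_{\{r=r^*\}} c_r$.
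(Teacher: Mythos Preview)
Your proposal is correct and is exactly the approach the paper takes: the paper does not give a separate proof of this proposition, treating it as an immediate rewriting of the MASO definition (\ref{eq:MASO}) via the one-hot/argmax identity (this is the passage around (\ref{eq:big12}) in the main text). Your write-up simply makes that rewriting explicit, coordinate by coordinate, and then stacks; nothing more is needed.
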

The VQ matrix $T^{(\ell)}_H$ always belongs to the set of all matrices with different one-hot positions (from $1$ to $R$) for each of the output dimensions $k=1,\dots,D^{(\ell)}$. We denote this VQ-matrix space as $\mathcal{T}_H^{(\ell)}=\{ [ a_1,\dots, a_{D^{(\ell)}}]^T, a_k \in \{\be_1,\dots,\be_R \}\}$ with $\be_r=\delta_r,dim(\be_r)=R$.

\section{Orthogonal Filters Details}
\label{sec:discussions}
The developed results on orthogonality induce orthogonality of the case of fully-connected layers. For the case on convolutional layer it implies orthogonality as well as non overlapping patches. This is not practical as it considerably reduces the spatial dimensions making very deep network unsuitable. As such we now propose a brief approximation result.
Due to the specificity of the convolution operator we are able to provide a tractable inference coupled with an apodization scheme. 
To demonstrate this, we first highlight that any input can be represented as a direct sum of its apodized patches. Then, we see that filtering apodized patches with a filter is equivalent to convolving the input with apodized filters. We first need to introduce the patch notation. We define a patch $\mathcal{P}[\bz^{(\ell-1)}](p_i,p_j)\in \{1,\dots,I^{(\ell)}\}\times \{1,\dots,J^{(\ell)}\}$ as the slice of the input with indices $c=1,\dots,K^{(\ell)},i=$ (all channels) and $(i,j)\in\{p_i,\dots,p_i+I^{(\ell)}_{\bC}\}\times \{p_i,\dots,p_i+J^{(\ell)}_{\bC}\}$, hence a patch starting at position $(p_i,p_j)$ and of same shape as the filters.

 Apodizing a signal in general corresponds to applying an apodization function (or windowing function)\cite{weisstein2002crc} $h$ onto it via an Hadamard product. Let define the $2D$ apodized functions $h: \Omega(I_{\bC}^{(\ell)},J_{\bC}^{(\ell)})\rightarrow \mathbb{R}^+$ with $\Omega(I_{\bC}^{(\ell)},J_{\bC}^{(\ell)})=\{1,\dots,I_{\bC}^{(\ell)}\}\times \{1,\dots,J_{\bC}^{(\ell)}\}$ and where we remind that $(I_{\bC}^{(\ell)},J_{\bC}^{(\ell)})$ is the spatial shape of the convolutional filters. Given a function $h$ such that $\sum_{u\in \Omega(I_{\bC}^{(\ell)},J_{\bC}^{(\ell)})}h(u)=1$ one can represent an input by summing the apodized patches as in
\begin{align}
    [\bz^{(\ell)}]_{k,i,j}=\sum_{(p_i,p_j)\in \{ i-I^{(\ell)}_{\bC},\dots,i \}\times \{ j-J^{(\ell)}_{\bC},\dots,j \}} \mathcal{P}[\bz^{(\ell-1)}](p_i,p_j) \odot h . \label{eq:apod}
\end{align}

The above highlights the ability to treat an input via its collection of patches with the condition to apply the defined apodization function.
With the above, we can demonstrate how minimizing the per patch reconstruction loss leads to minimizing the overall input modeling
\begin{align}
     0\leq \|\sum_{i,j}( h\odot \mathcal{P}[\bz^{(\ell)}](i,j)- [W^{(\ell)}]_{t^{(\ell)}(i,j)})\|^2 \leq \sum_{i,j}\| h\odot \mathcal{P}[\bz^{(\ell)}](i,j)- [W^{(\ell)}]_{t^{(\ell)}(i,j)}\|^2,
\end{align}
which represents the internal modeling of the factorial model applied across filters and patches.
As a result, when performing the per position minimization one minimizes an upper bound which ultimately reaches the global minimum  as
\begin{align}
    \| \mathcal{P}[\bz^{(\ell-1)}](p_i,p_j)-\mathcal{P}[\hat{\bz}^{(\ell-1)}](p_i,p_j)\|^2 \rightarrow 0 \implies \| \bz^{(\ell-1)} -\sum_{(p_i,p_j)} \mathcal{P}[\bz^{(\ell-1)}](p_i,p_j)\|^2=0.
\end{align}

\section{Interpretation: Initialization and Input Space Partitioning}
\label{sec:appendix_interpret2}

The GMM formulation and related inference also allows interpretation of the internal layer parameters. First we demonstrate how the region prior $\pi^{(\ell)}$ is affected by the layer parameters especially at initialization. Then we highlight how our result allows to generalize the input space partitioning results from \cite{balestriero2018spline,reportRB}.

\textbf{Region Prior.}~
The region prior of the GMM-MASO model $[\pi^{(\ell)}]_{k,.}$ (recall Thm.~\ref{thm2}) depends on the bias and norm of the layer weight as  $[\pi^{(\ell)}]_{k,.}\propto e^{[B^{(\ell)}]_{k,r}+\frac{1}{2}\| [A^{(\ell)}]_{k,r,.} \|^2}$. We can study how this region prior looks like at initialization. At initialization, common practice uses $[B^{(\ell)}]_{k,r}=0,\forall k,r$ and $[A^{(\ell)}]_{k,r,d}\sim \mathcal{N}(0,(v^{(\ell)})^2)$. This bias initialization leads to a cluster prior probability proportional to the norm of the weights. For example, the case of absolute value leads to $E(\| [A^{(\ell)}]_{k,1,.} \|^2)=E(\| [A^{(\ell)}]_{k,2,.} \|^2)$ and thus uniform prior as $E([\pi^{(\ell)}]_{k,.})= (0.5,0.5)^T$ for any initialization standard deviation $v^{(\ell)}$. On the other hand,  ReLU has always $\| [A^{(\ell)}]_{k,2,.} \|^2=0$ and $E\left(\| [A^{(\ell)}]_{k,r,.} \|^2\right)=D^{(\ell)}(v^{(\ell)})^2$. If one uses Xavier initialization \cite{glorot2010understanding} then $D^{(\ell)}(v^{(\ell)})^2=1$ and we thus have as prior probability $[\pi^{(\ell)}]_{k,.}\approx (0.62,0.38)^T$. THe latter slightly favors the inactive state of the ReLU and thus sparser activations. In general, the smaller $v^{(\ell)}$ is, the more the region prior will favor inactive state of the ReLU.

\textbf{Input Space Partitioning.}~
We now generalize the ability to study the input space partitioning which was before limited to the special case of $[B^{(\ell)}]_{k,r}=-\frac{1}{2}\| [A^{(\ell)}]_{k,r,.} \|^2$ (recall Prop.~\ref{prop:kmean}). Studying the input space partition is crucial as the MASO property implies that for each input region, an observation is transformed via a simple linear transformation. However, deriving insights on that is the actual partition is cumbersome as analytical formula are impractical and one thus has to probe the input space and record the observed VQ for each point to estimate the input space partitioning. We are now able to derive some clear links between the MASO partition and standard models which will allows much more efficient computation of the input space partitions.

\begin{cor}
\label{thm4}
A MASO with arbitrary parameters $[A^{(\ell)}]_{k,r,.},[B^{(\ell)}]_{k,r}$ has an input space partitioning being the same as a GMM with parameters from Thm.~\ref{thm2}.
\label{thm:kmean}
\end{cor}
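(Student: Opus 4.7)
The corollary is an immediate consequence of Theorem~\ref{thm2}, so my plan is essentially to unpack what ``input space partition'' means for each object and then invoke the theorem as a black box. By construction, the input space partition of the MASO at layer $\ell$ is the equivalence relation on $\R^{D^{(\ell-1)}}$ that declares two inputs equivalent iff they produce identical hard-VQ selection matrices $T^{(\ell)}_H$; equivalently, each region is a level set of the map $\bz^{(\ell-1)}\mapsto [\widehat{\bt^{(\ell)}}]_k$ for every $k$, which by (\ref{eq:big1}) is the $\argmax$ over $r$ of $\langle [A^{(\ell)}]_{k,r,\cdot},\bz^{(\ell-1)}\rangle+[B^{(\ell)}]_{k,r}$. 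Analogously, the input space partition of the GMM is the level sets of the MAP map $\bz^{(\ell-1)}\mapsto \argmax_r p(r\mid\bz^{(\ell-1)})$.

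The plan is then a one-liner: choose the GMM parameters exactly as prescribed by Theorem~\ref{thm2}, namely $\sigma^2=1$ and
\begin{equation*}
[\pi^{(\ell)}]_{k,r}=\frac{\exp\!\left([B^{(\ell)}]_{k,r}+\tfrac{1}{2}\|[A^{(\ell)}]_{k,r,\cdot}\|^2\right)}{\sum_{r'}\exp\!\left([B^{(\ell)}]_{k,r'}+\tfrac{1}{2}\|[A^{(\ell)}]_{k,r',\cdot}\|^2\right)},
\end{equation*}
with centroids $[A^{(\ell)}]_{k,r,\cdot}$. Theorem~\ref{thm2} asserts that under this choice the MAP latent $[\widehat{\bt^{(\ell)}}]_k$ coincides pointwise in $\bz^{(\ell-1)}$ with the MASO hard-VQ selection. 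Since two inputs lie in the same MASO region iff their hard-VQ assignments agree across all $k$, and iff (by the theorem) their MAP assignments under the GMM agree across all $k$, the two partitions of $\R^{D^{(\ell-1)}}$ are identical.

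The only subtlety worth flagging is the handling of the measure-zero boundary set where the $\argmax$ is non-unique; this is easily dispensed with by a standard tie-breaking convention (e.g.\ smallest index), applied consistently to both the MASO and the GMM MAP, so the region-wise partitions coincide not just up to a null set but exactly. There is no real obstacle here: the work is entirely contained in Theorem~\ref{thm2}, and the corollary's contribution is the interpretive observation that we are no longer tied to the restrictive bias constraint $[B^{(\ell)}]_{k,r}=-\tfrac{1}{2}\|[A^{(\ell)}]_{k,r,\cdot}\|^2$ of Proposition~\ref{prop:kmean}; the nonuniform prior $[\pi^{(\ell)}]_{k,\cdot}$ absorbs the discrepancy between arbitrary biases and the $K$-means-compatible biases. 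Consequently, any tool developed for analyzing GMM partitions (e.g.\ responsibility-based probing of decision boundaries) transfers to arbitrary MASO DNs, which is the practical payoff the corollary advertises.
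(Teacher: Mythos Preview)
Your proposal is correct and matches the paper's approach: the paper states this corollary without a separate proof, treating it as an immediate consequence of Theorem~\ref{thm2}, which is exactly what you do. Your explicit unpacking of ``input space partition'' for both objects and the tie-breaking remark are reasonable additions, but the substance is identical---invoke Theorem~\ref{thm2} to identify the MASO hard-VQ assignment with the GMM MAP assignment pointwise, whence the partitions coincide.
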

This augments previous study of the MASO input space partitioning only related to k-mean (recall Prop.~\ref{prop:kmean}) which required specific bias values. 
\iffalse
The per unit input space partitioning can be combined to obtain the layer input space partitioning which combines the $D^{(\ell)}$ partitioning. Obtaining this layer partitioning is of crucial importance to better understand and analyze the piecewise affine property of the layer as a whole. 
\begin{conj}
With conditions of Thm.~\ref{thm:kmean} the MASO input space partitioning per output dimension corresponds to a Voronoy map with centroids $\{[A^{(\ell)}]_{k,1,\bigcdot},\dots,[A^{(\ell)}]_{k,R^{(\ell)},\bigcdot}\}$ for each output $k$ and the MASO partitioning corresponds to the intersection of the Vornoy maps which itself is a new Voronoy map with centroids $\{\frac{1}{D^{(\ell)}} \sum_{k=1}^{D^{(\ell)}} [A^{(\ell)}]_{k,[r]_k,.},\forall r\in \{1,\dots,R^{(\ell)} \}^{D^{(\ell)}} \}$.
\end{conj}
\fi
\section{Deep Network Topologies and Datasets}
\label{ap:dntopology}
We first present the topologies used in the experiments except for the notation ResNetD-W which is the standard wide ResNet based topology with depth $D$ and width $W$.
We thus have the following network architectures for smallCNN and largeCNN:
\\
\iffalse
\small
\begin{center}
    smallCNN
\end{center}
\begin{verbatim}
Conv2DLayer(layers[-1],32,3,pad='valid')
Pool2DLayer(layers[-1],2)
Conv2DLayer(layers[-1],64,3,pad='valid')
Pool2DLayer(layers[-1],2)
Conv2DLayer(layers[-1],128,3,pad='valid')
GlobalPoolLayer(layers[-1])
\end{verbatim}
\fi
\begin{center}
    largeCNN
\end{center}
\begin{verbatim}
Conv2DLayer(layers[-1],96,3,pad='same')
Conv2DLayer(layers[-1],96,3,pad='same')
Conv2DLayer(layers[-1],96,3,pad='same',stride=2)
Conv2DLayer(layers[-1],192,3,pad='same')
Conv2DLayer(layers[-1],192,3,pad='same')
Conv2DLayer(layers[-1],192,3,pad='same',stride=2)
Conv2DLayer(layers[-1],192,3,pad='valid')
Conv2DLayer(layers[-1],192,1)
Conv2DLayer(layers[-1],10,1)
GlobalPoolLayer(layers[-1],2)
\end{verbatim}
\normalsize
where the Conv2DLayer(layers[-1],192,3,pad='valid') denotes a standard 2D convolution with $192$ filters of spatial size $(3,3)$ and with valid padding (no padding).

\section{Proofs}
\subsection{THEOREM 2}
\label{prooftheorem2}
\begin{proof}
The log-probability of the model corresponds to
\begin{align*}
    [t^{(\ell)}]_{k}=&\argmax_r \langle [A^{(\ell)}]_{k,r,.},\bz^{(\ell-1)} \rangle+[B^{(\ell)}]_{k,r}\\
    =&\argmax_r \langle [A^{(\ell)}]_{k,r,.},\bz^{(\ell-1)} \rangle +[B^{(\ell)}]_{k,r}+\frac{1}{2}\| [A^{(\ell)}]_{k,r,.}\|^2-\frac{1}{2}\| [A^{(\ell)}]_{k,r,.}\|^2\\
    =&\argmax_r \langle  [A^{(\ell)}]_{k,r,.},\bz^{(\ell-1)} \rangle+[B^{(\ell)}]_{k,r}+\frac{1}{2}\| [A^{(\ell)}]_{k,r,.}\|^2 \nonumber \\ 
    & -\log \left( \sum_r e^{[B^{(\ell)}]_{k,r}+\frac{1}{2}\| [A^{(\ell)}]_{k,r,.}\|^2} \right)-\frac{1}{2}\| [A^{(\ell)}]_{k,r,.} \|^2\\
    =&\argmax_r \langle  [A^{(\ell)}]_{k,r,.},\bz^{(\ell-1)} \rangle+\log \left( e^{[B^{(\ell)}]_{k,r}+\frac{1}{2}\| [A^{(\ell)}]_{k,r,.}\|^2}\right)\nonumber \\ 
    & -\log \left( \sum_r e^{[B^{(\ell)}]_{k,r}+\frac{1}{2}\| [A^{(\ell)}]_{k,r,.}\|^2} \right)-\frac{1}{2}\| [A^{(\ell)}]_{k,r,.} \|^2\\
    =&\argmax_r \langle  [A^{(\ell)}]_{k,r,.},\bz^{(\ell-1)}\rangle+\log \left( \frac{ e^{[B^{(\ell)}]_{k,r}+\frac{1}{2}\| [A^{(\ell)}]_{k,r,.}\|^2}}{\sum_r e^{[B^{(\ell)}]_{k,r}+\frac{1}{2}\| [A^{(\ell)}]_{k,r,.}\|^2}} \right)-\frac{1}{2}\| [A^{(\ell)}]_{k,r,.} \|^2\\
    =&\argmax_r\; \log\left(p(x|r)p(r)\right)-\frac{1}{2}\| \bz^{(\ell-1)} \|^2\\
    =&\argmax_r\; p(x|r)p(r)
\end{align*}
We also remind the reader that $\argmax_r p(z^{(\ell-1)}|r)p(r)=\argmax_r \log(p(z^{(\ell-1)}|r)p(r))$. Based on the above it is straightforward to derive (\ref{eq:tmasoeq}) from the above.
\end{proof}

\subsection{Entropy Regularized Optimization}
\label{prooftheorem3}
\begin{proof}
We are interested into the following optimization problem:
\begin{align*}
    [t^{(\ell)*}]_k=&\argmax_{q[\ell,k]}F(q[\ell,k],\Theta)=\argmax_{q[\ell,k]} E_{q}[\log(p(z^{(\ell-1)}|[t^{(\ell)}]_k)p([t^{(\ell)}]_k))]+H([t^{(\ell)}]_k)\\
    =&\argmax_{u^{(\ell)} \in \bigtriangleup_R}\left(\sum_r [u^{(\ell)}]_{k,r}[\frac{-1}{2\sigma^2}\| \bz^{(\ell-1)}-\mu_r\|^2+\log(\pi_r)]-\sum_r [u^{(\ell)}]_{k,r}\log([u^{(\ell)}]_{k,r})\right).
\end{align*}
We now use the KKT and Lagrange multiplier to optimize the new loss function (per $k$) including the equality constraint
\begin{align*}
    \mathcal{L}(u)=\sum_r [u]_{r}[\frac{-1}{2\sigma^2}\| \bz^{(\ell-1)}-\mu_r\|^2+\log(\pi_r)]-\sum_r [u]_{r}\log([u^{(\ell)}]_{r})+\lambda (\sum_r [u]_r-1)
\end{align*}
Due to the strong duality we can directly optimize the primal and dual problems and solve jointly all the partial derivatives to $0$. We thus obtain by denoting $A_r:=[\frac{-1}{2\sigma^2}\| \bz^{(\ell-1)}-\mu_r\|^2+\log(\pi_r)]$
\begin{align*}
    \frac{\partial \mathcal{L}}{\partial [u]_p}& =A_p-\log([u]_p)-1+\lambda,\forall p\\
    \frac{\partial \mathcal{L}}{\partial \lambda }& = \sum_r [u]_r-1
\end{align*}
we can now set the derivatives to $0$ and see that this leads to 
$[u]_p=e^{A_p-1+\lambda},\forall p$. We can now sum over $p$ to obtain
\begin{align*}
    [u]_p=e^{A_p-1+\lambda},\forall p \implies & \sum_{p}[u]_p=\sum_p e^{A_p-1+\lambda}\\
    \implies & 1=\sum_p e^{A_p-1+\lambda} \\
    \implies & 1=e^{\lambda}\sum_p e^{A_p-1}\\
    \implies & 0=\lambda + \log(\sum_p e^{A_p-1})
\end{align*}
which leads to $\lambda = -\log(\sum_p e^{A_p-1})$. Plugging this back into the above equation we obtain
\begin{align*}
    [u]_p=&e^{A_p-1+\lambda}=\frac{e^{A_p-1}}{\sum_p e^{A_p-1}}=\frac{e^{A_p}}{\sum_p e^{A_p}}
\end{align*}
\end{proof}
\subsection{THEOREM 3}
\label{prooftheorem5}
For the proof of THEOREM 3 please refer to the proof in \ref{prooftheorem3} by applying the convex combination with coefficients $\beta$.
\subsection{THEOREM 4}
\label{prooftheorem6}
\begin{proof}
The proof to demonstrate this inference and VQ equality is essentially the same as the one of GMM-MASO (\ref{prooftheorem2}) with addition of the following first step:
\begin{align*}
    \| \bz^{(\ell-1)}-\sum_{k=1}^{D^{(\ell)}} [W^{(\ell)}]_{k,r,.} \|^2=\| \bz^{(\ell-1)} \|^2-2\sum_{k=1}^{D^{(\ell)}}\sum_{r=1}^{R^{(\ell)}}[W^{(\ell)}]_{k,[r_k],.}+ \sum_{k=1}^{D^{(\ell)}} \|[W^{(\ell)}]_{k,[r_k],.} \|^2
\end{align*}
for any configuration $r \in \{ 1,\dots,R^{(\ell)}\}^{D^{(\ell)}}$. Using the same results we can re-write the independent joint optimization as multiple independent optimization problems.
\end{proof}

\subsection{PROPOSITIONS 3 and 4}
\label{proofproposition1}
For PROPOSITION 4 using the developed formula one can extend the following proof for max-pooling. 
\begin{proof} 
\begin{align}
    [\bz^{(\ell)}(\bx)&]_k=\frac{\langle e^{\bA^{(\ell)}[k,2],\bz^{(\ell-1)(\bx)}\rangle+\bB^{(\ell)}[k,2]}}{1+e^{\langle \bA^{(\ell)}[k,2],\bz^{(\ell-1)}(\bx)\rangle+\bB^{(\ell)}[k,2]}}\nonumber \\
    &\times (\langle \bA^{(\ell)}[k,2],\bz^{(\ell-1)}(\bx)\rangle+\bB^{(\ell)}[k,2])\nonumber \\
    =&\sigma_{\rm sigmoid}(\langle \bA^{(\ell)}[k,2],\bz^{(\ell-1)}(\bx)\rangle+\bB^{(\ell)}[k,2])\nonumber \\
    &(\langle \bA^{(\ell)}[k,2],\bz^{(\ell-1)}(\bx)\rangle+\bB^{(\ell)}[k,2])\nonumber \\
    =&\sigma_{\rm sigmoid}(\langle [\bC^{(\ell)}]_{k,.},\bz^{(\ell-1)}(\bx)\rangle+[\bb_{\bC}^{(\ell)}]_{k})\nonumber \\
    &\times (\langle [\bC^{(\ell)}]_{k,.},\bz^{(\ell-1)}(\bx)\rangle+[\bb_{\bC}^{(\ell)}]_{k})
\end{align}
with $1$ for the first region exponential as $e^{\langle \bA^{(\ell)}[k,2],\bz^{(\ell-1)}(\bx)\rangle+\bB^{(\ell)}[k,1]}=e^0=1$ and the last line demonstrating the case where ReLU activation and convolution was the internal layer configuration for illustrative purposes.
\end{proof}

\subsection{PROPOSITIONS 5 and 6}
\label{proofproposition3}
\begin{proof} 
\begin{align*}
    p&([t^{(\ell)}]_k=1|\bz^{(\ell-1)})=\frac{p(\bz^{(\ell-1)}|[t^{(\ell)}]_k=1)p([t^{(\ell)}]_k=1)}{p(\bz^{(\ell-1)})}\\
    &=\frac{p(\bz^{(\ell-1)}|[t^{(\ell)}]_k=1)p([t^{(\ell)}]_k=1)}{p(\bz^{(\ell-1)}|[t^{(\ell)}]_k=0)p([t^{(\ell)}]_k=0)+p(\bz^{(\ell-1)}|[t^{(\ell)}]_k=1)p([t^{(\ell)}]_k=1)}\\
    &=\frac{e(- \frac{\|\bz^{(\ell-1)}-[\bC^{(\ell)}]_{k,.} \|^2}{2})\frac{e(\frac{1}{2}\|[\bC^{(\ell)}]_{k,.} \|^2+[B^{(\ell)}]_{k,1})}{1+e(\frac{1}{2}\|[\bC^{(\ell)}]_{k,.} \|^2+[B^{(\ell)}]_{k,1})}}{e(- \frac{\|\bz^{(\ell-1)} \|^2}{2})\frac{1}{1+e(\frac{1}{2}\|[\bC^{(\ell)}]_{k,.} \|^2+[B^{(\ell)}]_{k,1})}+e(- \frac{\|\bz^{(\ell-1)}-[\bC^{(\ell)}]_{k,.} \|^2}{2})\frac{e(\frac{1}{2}\|[\bC^{(\ell)}]_{k,.} \|^2+[B^{(\ell)}]_{k,1})}{1+e(\frac{1}{2}\|[\bC^{(\ell)}]_{k,.} \|^2+[B^{(\ell)}]_{k,1})}} \\
    &=\frac{e(- \frac{\|\bz^{(\ell-1)}-[\bC^{(\ell)}]_{k,.} \|^2}{2})e(\frac{1}{2}\|[\bC^{(\ell)}]_{k,.} \|^2+[B^{(\ell)}]_{k,1})}{e(- \frac{\|\bz^{(\ell-1)} \|^2}{2})+e(- \frac{\|\bz^{(\ell-1)}-[\bC^{(\ell)}]_{k,.} \|^2}{2})e(\frac{1}{2}\|[\bC^{(\ell)}]_{k,.} \|^2+[B^{(\ell)}]_{k,1})}\\
    &=\frac{e(\langle \bz^{(\ell-1)},[\bC^{(\ell)}]_{k,.}\rangle+[B^{(\ell)}]_{k,1})}{1+e(\langle \bz^{(\ell-1)},[\bC^{(\ell)}]_{k,.}\rangle+[B^{(\ell)}]_{k,1})}\\
    &=\sigma_{sigmoid}([u^{(\ell)}]_k).
\end{align*}
While this is direct for sigmoid DNs, the use of hyperbolic tangent requires to reparametrize the current and following layer weights and biases to represent the shifting scaling as in $\bC^{(\ell)}:=2\bC^{(\ell)}$ and $\bC^{(\ell+1)}:=2\bC^{(\ell+1)},b_{\bC}^{(\ell+1)}:=b_{\bC}^{(\ell+1)}-1$ with $\bC$ replaced by $\bW$ for fully connected operators.
\end{proof}

\end{document}